\newcommand{\minimize}{\argmin\limits_{x \in \Omega}}
\newcommand{\minimizevalue}{\inf\limits_{x \in \Omega}}
\newcommand{\maximizevalue}{\sup\limits_{x \in \Omega}}
\newcommand{\maximizepairvalue}{\sup\limits_{x,y \in \Omega}}
\newcommand{\bregmanadaregpsi}[2]{\mathcal{B}_{\psi} \del{#1,#2}}
\newcommand{\bregmangunc}[2]{\mathcal{B}_{g} \del{#1,#2}}
\newcommand{\bregmanadareg}[2]{\mathcal{B}_{r_{0:t}} \del{#1,#2}}
\newcommand{\bregmanadaregcumalative}[3]{\mathcal{B}_{r_{0:#3}} \del{#1,#2}}
\newcommand{\bregmanadaregsingle}[3]{\mathcal{B}_{r_{#3}} \del{#1,#2}}
\newcommand{\regderivative}[1]{\nabla r_{0:t} \del{#1}}
\newcommand{\psiderivative}[1]{\nabla \psi \del{#1}}
\newcommand{\stronghlambda}{H_{1:t}+\lambda_{1:t}}
\newcommand{\stronghlambdaminus}{H_{1:t}+\lambda_{1:t-1}}
\newcommand{\compositefunc}[2]{f_t \del{#1} + \Psi_t \del{#2}}
\newcommand{\newreptheorem}[2]{\newtheorem*{rep@#1}{\rep@title} 
\newenvironment{rep#1}[1]{\def\rep@title{#2 \ref*{##1}}\begin{rep@#1}}{\end{rep@#1}}
}
\title{Improved Optimistic Mirror Descent for Sparsity and Curvature}
\author{Parameswaran Kamalaruban\\
Australian National University and Data61, Canberra, Australia\\
\texttt{kamalaruban.parameswaran@data61.csiro.au}}
\date{}
\begin{document}

\maketitle

\begin{abstract}
Online Convex Optimization plays a key role in large scale machine learning. Early approaches to this problem were conservative, in which the main focus was protection against the worst case scenario. But recently several algorithms have been developed for tightening the regret bounds in easy data instances such as sparsity, predictable sequences, and curved losses. In this work we unify some of these existing techniques to obtain new update rules for the cases when these easy instances occur together. First we analyse an adaptive and optimistic update rule which achieves tighter regret bound when the loss sequence is sparse and predictable. Then we explain an update rule that dynamically adapts to the curvature of the loss function and utilizes the predictable nature of the loss sequence as well. Finally we extend these results to composite losses.
\end{abstract}
\section{Introduction}
\label{introduction}
The Online Convex Optimization (OCO) problem plays a key role in machine learning as it has interesting theoretical implications and important practical applications especially in the large scale setting where computational efficiency is the main concern. \cite{shalev2011online} provides a detailed analysis of the OCO problem setting and discusses several applications of this paradigm - online regression, prediction with expert advice, and online ranking.

Given a convex set $\Omega \subseteq \mathbb{R}^n$ and a set $\mathcal{F}$ of convex functions, the OCO problem can be formulated as a repeated game between a learner and an adversary. At each time step $t \in \sbr{T}$, the learner chooses a point $x_t \in \Omega$, then the adversary reveals the loss function $f_t \in \mathcal{F}$, and the learner suffers a loss of $f_t \del{x_t}$. The learner's goal is to minimize the regret which is given by
\begin{equation*}
	R \del{T} := \sum_{t=1}^{T}{f_t \del{x_t}} - \minimizevalue{\sum_{t=1}^{T}{f_t \del{x}}}.
\end{equation*}

There are two main classes of minimax optimal update rules for the OCO problem, namely Follow The Regularized Leader (FTRL) and Mirror Descent. In this work we consider the latter class. Given a strongly convex function $\psi$ and a learning rate $\eta > 0$, standard mirror descent update is given by
\begin{equation}
	\label{minimx-omd}
	x_{t+1}=\minimizevalue{\eta \tuple{g_t,x_t}+\bregmanadaregpsi{x}{x_t}},
\end{equation}
where $g_t \in \partial f_t \del{x_t}$ and $\bregmanadaregpsi{\cdot}{\cdot}$ is Bregman divergence (formally defined later).

It is well understood that the minimax optimal algorithms achieve a regret bound of $O\del{\sqrt{T}}$, which cannot be improved for arbitrary sequences of convex losses \cite{zinkevich2003online}. But in practice there are several \textit{easy data} instances such as sparsity, predictable sequences and curved losses, in which much tighter regret bounds are achievable. Even though minimax analysis gives robust algorithms, they are overly conservative on \textit{easy data}. Now we consider some of the existing algorithms that automatically adapt to the \textit{easy data} to learn faster while being robust to worst case as well.

\cite{duchi2011adaptive} replaces the single static regularizer in the standard mirror descent update~\ref{minimx-omd} by a data dependent sequence of regularizers. This is a fully adaptive approach as it doesn't require any prior knowledge about the bound on the term given by $\sum_{t=1}^{T}{\norm{g_t}^2}$ to construct the regularizers. Further for a particular choice of regularizer sequence they achieved a regret bound of the form $O\del*{\max\limits_t{\norm{x_t-x^*}_{\infty}} \sum_{i=1}^{n}{\sqrt{\sum_{t=1}^{T}{g_{t,i}^2}}}}$, which is better than the minimax optimal bound when the gradients of the losses are sparse and the prediction space is box-shaped.

\cite{chiang2012online,rakhlin2012online} have shown that an optimistic prediction $\tilde{g}_{t+1}$ of the next gradient $g_{t+1}$ at time $t$ can be used to achieve tighter regret bounds in the case where the loss functions are generated by some predictable process e.g. i.i.d losses with small variance and slowly changing gradients. For the general convex losses, the regret bound of this optimistic approach is $O \del*{\sqrt{\sum_{t=1}^{T}{\norm{g_t - \tilde{g}_t}_*^2}}}$. But this is a non-adaptive approach since one requires knowledge of the upper bound on $\sum_{t=1}^{T}{\norm{g_t - \tilde{g}_t}_*^2}$ to set the optimal value for the learning rate. Instead we can employ the standard doubling trick to obtain similar bound with slightly worst constants.

Online optimization with curved losses (strong-convex, exp-concave, mixable etc.) is easier than linear losses. When the loss functions are uniformly exp-concave or strongly convex, $O\del{\log{T}}$ regret bounds are achieved with appropriate choice of regularizers \cite{hazan2007logarithmic,hazan2007adaptive}. But this bound will become worse when the uniform lower bound on the convexity parameters is much smaller. In that case \cite{hazan2007adaptive} proposed an algorithm that can adapt to the convexity of the loss functions, and achieves $O\del{\sqrt{T}}$ regret bounds for arbitrary convex losses and $O\del{\log{T}}$ for uniformly strong-convex losses.

Even though \cite{mcmahan2014analysis} has shown equivalence between mirror descent and a variant of FTRL (namely FTRL-Prox) algorithms with adaptive regularizers, no such mapping is available between optimistic mirror descent and optimistic FTRL updates. Recently \cite{2015arXiv150905760M} have combined adaptive FTRL and optimistic FTRL updates to achieve tighter regret bounds for sparse and predictable sequences. In section~\ref{improve-sparsity} we extend this unification to obtain adaptive and optimistic mirror descent updates. We obtained a factor of $\sqrt{2}$ improvement in the regret bound compared to that of \cite{2015arXiv150905760M}, because in their regret analysis they could not apply the strong FTRL lemma from \cite{mcmahan2014analysis}.

In section~\ref{improve-curvature} we consider the adaptive and optimistic mirror descent update with strongly convex loss functions. In this case we achieve tighter logarithmic regret bound without a priori knowledge about the lower bound on the strong-convexity parameters, in similar spirit of \cite{hazan2007adaptive}. We also explain a curvature adaptive optimistic algorithm that interpolates the results for general convex losses and strongly-convex losses.

In practice the original convex optimization problem itself can have a regularization term associated with the constraints of the problem and generally it is not preferable to linearize those regularization terms. In section~\ref{extensions} we extend all our results to such composite objectives as well.

The main contributions of this paper are:
\begin{itemize}
	\item An adaptive and optimistic mirror descent update that achieves tighter regret bounds for sparse and predictable sequences (Section~\ref{improve-sparsity}).
	\item Improved optimistic mirror descent algorithm that adapts to the curvature of the loss functions (Section~\ref{improve-curvature}).
	\item Extension of the unified update rules to the composite objectives (Section~\ref{extensions}).   
\end{itemize}

Omitted proofs are given in the appendix.

\section{Notation and Background}
\label{background}
We use the following notation throughout. For $n \in \mathbb{Z}^{+}$, let $[n]:=\{1,...,n\}$. The $i$th element of a vector $x \in \mathbb{R}^n$ is denoted by $x_i \in \mathbb{R}$, and for a time dependent vector $x_t \in \mathbb{R}^n$, the $i$th element is $x_{t,i} \in \mathbb{R}$. The inner product between two vectors $x,y \in \mathbb{R}^n$ is written as $\tuple{x,y}$. The gradient of a differentiable function $f$ at $x \in \mathbb{R}^n$ is denoted by $\nabla f \del{x}$ or $f' \del{x}$. A superscript $T$, $A^T$ denotes transpose of the matrix or vector $A$. Given $x \in \mathbb{R}^n$, $A=\mathrm{diag}\del{x}$ is the $n \times n$ matrix with entries $A_{i,i}=x_i$ , $i \in [n]$ and $A_{i,j}=0$ for $i \neq j$. For a symmetric positive definite matrix $A \in S_{++}^n$, we have that $\forall{x \neq 0}, x^T A x > 0$. If $A-B \in S_{++}^n$, then we write $A \succ B$. The square root of $A \in S_{++}^n$ is the unique matrix $X \in S_{++}^n$ such that $XX=A$ and it is denoted as $A^{\half}$. We use the compressed summation notation $H_{a:b}$ as shorthand for $\sum_{s=a}^{b}{H_s}$, where $H_s$ can be a scalar, vector, matrix, or function. Given a norm $\norm{\cdot}$, its dual norm is defined as follows $\norm{y}_* := \sup\limits_{x: \norm{x} \leq 1}{\tuple{x,y}}$. For a time varying norm $\norm{\cdot}_{\del{t}}$, its dual norm is written as $\norm{\cdot}_{\del{t},*}$. The dual norm of the Mahalanobis norm $\norm{x}_A := \sqrt{x^T A x}$ is given by  $\norm{y}_{A^{-1}} = \sqrt{y^T A^{-1} y}$.

Given a convex set $\Omega \subseteq \mathbb{R}^n$ and a convex function $f:\Omega \rightarrow \mathbb{R}$, $\partial f \del{x}$ denotes the sub-differential of $f$ at $x$ which is defined as $\partial f \del{x}:=\setdel{g:f\del{y} \geq f\del{x} + \tuple{g,y-x}, \, \forall{y \in \Omega}}$. A function $f: \Omega \rightarrow \mathbb{R}$ is $\alpha$-strongly convex with respect to a general norm $\norm{\cdot}$ if for all $x,y \in \Omega$
\begin{equation*}
	f \del{x} ~\geq~ f \del{y} + \tuple{g,x-y} + \frac{\alpha}{2} \norm{x-y}^2, \, g \in \partial f \del{y}.
\end{equation*}
Bregman divergence with respect to a differentiable function $g$ is defined as follows
\begin{equation*}
	\bregmangunc{x}{y} ~:=~ g \del{x} - g \del{y} - \tuple{\nabla g \del{y},x-y}.
\end{equation*}
Observe that the function $g$ is $\alpha$-strongly convex with respect to $\norm{\cdot}$ if and only if for all $x,y \in \Omega$ : $\bregmangunc{x}{y} \geq \frac{\alpha}{2} \norm{x-y}^2$. In this paper we use the following properties of Bregman divergences
\begin{itemize}
	\item Linearity: $\mathcal{B}_{\alpha \psi + \beta \phi}\del{x,y} = \alpha \mathcal{B}_{\psi}\del{x,y} + \beta \mathcal{B}_{\phi}\del{x,y}$.
	\item Generalized triangle inequality: $\bregmanadaregpsi{x}{y} + \bregmanadaregpsi{y}{z} = \bregmanadaregpsi{x}{z} + \tuple{x-y,\nabla \psi \del{z} - \nabla \psi \del{y}}$.
\end{itemize}

The following proposition \cite{srebro2011universality,beck2003mirror} is handy in deriving explicit update rules for mirror descent algorithms that we have presented in this work.
\begin{proposition}
	\label{simple-form-omd}
	Suppose $\psi$ is strictly convex and differentiable, and $y$ satisfies the condition $\nabla \psi \del{y} = \nabla \psi \del{u} - g$. Then
	\begin{equation*}
		\minimize{\setdel*{\tuple{g,x} + \bregmanadaregpsi{x}{u}}} = \minimize{\bregmanadaregpsi{x}{y}}.
	\end{equation*}
\end{proposition}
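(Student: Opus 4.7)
The plan is to show that the two objective functions, as functions of $x$, differ only by a constant (independent of $x$); the equality of the $\argmin$ sets over $\Omega$ then follows immediately. The identity $\nabla \psi(y) = \nabla \psi(u) - g$ is exactly what is needed to bridge the two expressions once the Bregman divergences are expanded.

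First I would expand the left-hand side objective using the definition of Bregman divergence:
\[
\tuple{g,x} + \bregmanadaregpsi{x}{u} ~=~ \tuple{g,x} + \psi(x) - \psi(u) - \tuple{\nabla \psi(u),\, x-u}.
\]
Collecting the terms that are linear in $x$ and absorbing into a constant $C_1$ everything that does not depend on $x$, this becomes
\[
\psi(x) + \tuple{g - \nabla \psi(u),\, x} + C_1 ~=~ \psi(x) - \tuple{\nabla \psi(y),\, x} + C_1,
\]
where the last equality uses the hypothesis $\nabla \psi(y) = \nabla \psi(u) - g$, i.e.\ $g - \nabla \psi(u) = -\nabla \psi(y)$.

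Second I would expand the right-hand side in the same way:
\[
\bregmanadaregpsi{x}{y} ~=~ \psi(x) - \psi(y) - \tuple{\nabla \psi(y),\, x-y} ~=~ \psi(x) - \tuple{\nabla \psi(y),\, x} + C_2,
\]
with $C_2$ independent of $x$. Comparing the two displays, the two objective functions differ by the constant $C_1 - C_2$, so they share the same set of minimizers over $\Omega$.

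I do not foresee a genuine obstacle: the argument is a one-line algebraic manipulation hinged on the defining identity for $y$. The only sanity check worth stating is that strict convexity and differentiability of $\psi$ ensure that both objectives are strictly convex in $x$, so if the $\argmin$ over $\Omega$ exists it is unique and the notation in the statement is unambiguous.
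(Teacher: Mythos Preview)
Your proposal is correct and is essentially the same argument as the paper's own proof: both expand the Bregman divergences, drop the terms that do not depend on $x$, and use the hypothesis $\nabla\psi(y)=\nabla\psi(u)-g$ to see that the two objectives differ by an additive constant. The paper just writes this as a single chain of $\argmin$ equalities starting from $\bregmanadaregpsi{x}{y}$ and ending at $\tuple{g,x}+\bregmanadaregpsi{x}{u}$, whereas you expand both sides separately and compare.
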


\section{Adaptive and Optimistic Mirror Descent}
\label{improve-sparsity}
\begin{algorithm}[tb]
	\caption{Adaptive and Optimistic Mirror Descent}
	\label{algo:ada-opt-omd}
	\begin{algorithmic}
		\STATE {\bfseries Input:} regularizers $r_0, r_1 \geq 0$.
		\STATE {\bfseries Initialize:} $x_1, \hat{x}_1 = 0 \in \Omega$.
		\FOR{$t=1$ {\bfseries to} $T$}
		\STATE Predict $\hat{x}_t$, observe $f_t$, and incur loss $f_t \del{\hat{x}_t}$. \STATE Compute $g_t \in \partial f_t \del{\hat{x}_t}$ and $\tilde{g}_{t+1} \del{g_1,...,g_t}$.
		\STATE Construct $r_{t+1}$ s.t. $r_{0:t+1}$ is $1$-strongly convex w.r.t. $\norm{\cdot}_{\del{t+1}}$.
		\STATE Update
		\begin{align}
			x_{t+1}
			~=~&
			\minimize{\tuple{g_t,x} + \bregmanadareg{x}{x_t}},
			\label{ada-omd-opt-eq1}
			\\
			\hat{x}_{t+1}
			~=~&
			\minimize{\tuple{\tilde{g}_{t+1},x} + \bregmanadaregcumalative{x}{x_{t+1}}{t+1}}.
			\label{ada-omd-opt-eq2}
		\end{align}
		\ENDFOR
	\end{algorithmic}
\end{algorithm}

When the sequences are predictable \cite{chiang2012online} proposed that making an optimistic prediction of $x_{t+1}$ at time $t$ itself using the optimistic prediction $\tilde{g}_{t+1} \del{g_1,...,g_t}$ of the next sub-gradient $g_{t+1}$. For the optimistic prediction choices of $\tilde{g}_{t+1} = \frac{1}{t} \sum_{s=1}^{t}{g_s}$ and $\tilde{g}_{t+1}=g_t$, we obtain the variance bound \cite{hazan2010extracting} and the path length bound \cite{chiang2012online} respectively.

Given a $1$-strongly convex function $\psi$, and a learning rate $\eta > 0$, the optimistic mirror descent update can be given by two stage updates as follows
\begin{align*}
	x_{t+1} ~=~& \minimize{\eta\tuple{g_t,x_t}+\bregmanadaregpsi{x}{x_t}} \\
	\hat{x}_{t+1} ~=~& \minimize{\eta\tuple{\tilde{g}_{t+1},x_t}+\bregmanadaregpsi{x}{x_{t+1}}}.
\end{align*}
Adaptive and Optimistic mirror descent update is obtained by replacing the static regularizer $\psi$ by a sequence of data dependent regularizers $r_t$'s, which are chosen such that $r_{0:t}$ is $1$-strongly convex with respect to $\norm{\cdot}_{\del{t}}$. The unified update is given in Algorithm~\ref{algo:ada-opt-omd}. Note that the regularizer $r_{t+1}$ is constructed at time $t$ (based on the data observed only up to time $t$) and is used in the second stage update \eqref{ada-omd-opt-eq2}. Also observe that by setting $\tilde{g}_t=0$ for all $t$ in Algorithm~\ref{algo:ada-opt-omd} we recover a slightly modified adaptive mirror descent update given by $x_{t+1} = \minimize{\tuple{g_t,x_t}+\bregmanadareg{x}{x_t}}$, where $r_t$ can depend only on $g_1,...,g_{t-1}$.

The following lemma is a generalization of Lemma 5 from \cite{chiang2012online} for time varying norms, which gives a bound on the instantaneous linear regret of Algorithm~\ref{algo:ada-opt-omd}. 
\begin{lemma}
	\label{ada-opt-omd-linear-regret-lemma}
	The instantaneous linear regret of Algorithm~\ref{algo:ada-opt-omd} w.r.t. any $x^* \in \Omega$ can be bounded as follows
	\[
		\tuple{\hat{x}_t - x^* , g_t} ~\leq~ \bregmanadareg{x^*}{x_t} - \bregmanadareg{x^*}{x_{t+1}} + \half \norm{g_t-\tilde{g}_t}_{\del{t},*}^2.		
	\]
\end{lemma}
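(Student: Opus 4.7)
The plan is to decompose the inner product $\tuple{\hat{x}_t - x^* , g_t}$ into three pieces, bound each using a variational inequality coming from the optimality of the two updates combined with the generalized triangle inequality for Bregman divergences, and finally cancel the leftover Bregman terms using the $1$-strong convexity of $r_{0:t}$ with respect to $\norm{\cdot}_{\del{t}}$.

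Concretely, I would start from the identity
\[
\tuple{\hat{x}_t - x^*, g_t} ~=~ \tuple{\hat{x}_t - x_{t+1}, g_t - \tilde{g}_t} + \tuple{\hat{x}_t - x_{t+1}, \tilde{g}_t} + \tuple{x_{t+1} - x^*, g_t}.
\]
The third term is handled by the variational inequality for \eqref{ada-omd-opt-eq1}, namely $\tuple{g_t + \regderivative{x_{t+1}} - \regderivative{x_t}, x^* - x_{t+1}} \geq 0$, and then converting the gradient differences into Bregman terms via the generalized triangle inequality applied to the triple $(x^*, x_{t+1}, x_t)$; this yields
\[
\tuple{x_{t+1} - x^*, g_t} ~\leq~ \bregmanadareg{x^*}{x_t} - \bregmanadareg{x^*}{x_{t+1}} - \bregmanadareg{x_{t+1}}{x_t}.
\]
The second term is treated in the same spirit, using the optimality of $\hat{x}_t$ (produced by \eqref{ada-omd-opt-eq2} at the previous step, with the \emph{same} regularizer $r_{0:t}$, since $r_{t+1}$ is only constructed after $g_t$ has been observed) tested against $x_{t+1}$, followed by the triangle identity on $(x_{t+1}, \hat{x}_t, x_t)$:
\[
\tuple{\hat{x}_t - x_{t+1}, \tilde{g}_t} ~\leq~ \bregmanadareg{x_{t+1}}{x_t} - \bregmanadareg{x_{t+1}}{\hat{x}_t} - \bregmanadareg{\hat{x}_t}{x_t}.
\]
For the ``mismatch'' first term I would apply Fenchel--Young with the time-varying dual pair $(\norm{\cdot}_{\del{t}}, \norm{\cdot}_{\del{t},*})$ to obtain $\tuple{\hat{x}_t - x_{t+1}, g_t - \tilde{g}_t} \leq \half \norm{\hat{x}_t - x_{t+1}}_{\del{t}}^2 + \half \norm{g_t - \tilde{g}_t}_{\del{t},*}^2$.

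Summing the three bounds, the $\pm \bregmanadareg{x_{t+1}}{x_t}$ terms cancel, $-\bregmanadareg{\hat{x}_t}{x_t} \leq 0$ is dropped, and the $1$-strong convexity of $r_{0:t}$ gives $\bregmanadareg{x_{t+1}}{\hat{x}_t} \geq \half \norm{x_{t+1} - \hat{x}_t}_{\del{t}}^2$, which exactly absorbs the quadratic term produced by Fenchel--Young. What remains is the claimed inequality. The main subtlety is bookkeeping: one must verify that both variational inequalities involve the same regularizer $r_{0:t}$ so that the generalized triangle identity can be composed cleanly, and that the norm used for Fenchel--Young matches the one in whose sense $r_{0:t}$ is strongly convex; these are precisely the places where a time-varying norm generalization of Chiang et al.'s Lemma~5 could slip if the construction schedule for $r_{t+1}$ were misaligned.
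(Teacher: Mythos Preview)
Your proposal is correct and follows essentially the same route as the paper: the same three-term decomposition of $\tuple{\hat{x}_t-x^*,g_t}$, the same Fenchel--Young bound on the mismatch term, the same two first-order optimality inequalities (with the same regularizer $r_{0:t}$ in both), and the same cancellation/absorption via $1$-strong convexity. Your explicit remark about why both updates involve $r_{0:t}$ is exactly the bookkeeping the paper relies on implicitly.
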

\begin{proof}
	Consider
	\begin{equation}
		\label{ada-opt-omd-linear-regret-lemma-proof1}
		\tuple{g_t,\hat{x}_t-x^*} 
		=
		\tuple{g_t - \tilde{g}_{t},\hat{x}_t-x_{t+1}} + \tuple{\tilde{g}_{t},\hat{x}_t-x_{t+1}}
		+ \tuple{g_t,x_{t+1}-x^*}.
	\end{equation}
	By the fact that $\tuple{a,b} \leq \norm{a} \norm{b}_* \leq \half \norm{a}^2 + \half \norm{b}_*^2$, we have 
	\begin{equation*}
		\tuple{g_t - \tilde{g}_{t},\hat{x}_t-x_{t+1}} ~\leq~ \half \norm{\hat{x}_t-x_{t+1}}_{\del{t}}^2 + \half \norm{g_t - \tilde{g}_{t}}_{\del{t},*}^2.
	\end{equation*}
	The first-order optimality condition \cite{boyd2004convex} for $x^* = \minimize{\tuple{g,x} + \bregmanadaregpsi{x}{y}}$ is given by
	\begin{equation*}
		\tuple{x^* - z , g} ~\leq~ \bregmanadaregpsi{z}{y} - \bregmanadaregpsi{z}{x^*} - \bregmanadaregpsi{x^*}{y}, \forall{z \in \Omega}.
	\end{equation*}
	By applying the above condition for \eqref{ada-omd-opt-eq2} and \eqref{ada-omd-opt-eq1} we have respectively 
	\begin{align*}
		\tuple{\hat{x}_t-x_{t+1} , \tilde{g}_{t}} 
		~\leq~& \bregmanadareg{x_{t+1}}{x_{t}} - \bregmanadareg{x_{t+1}}{\hat{x}_t} 
		- \bregmanadareg{\hat{x}_t}{x_{t}},
		\\
		\tuple{x_{t+1} - x^* , g_t} 
		~\leq~& \bregmanadareg{x^*}{x_t} - \bregmanadareg{x^*}{x_{t+1}} 
		- \bregmanadareg{x_{t+1}}{x_t}.
	\end{align*}
	Thus by \eqref{ada-opt-omd-linear-regret-lemma-proof1} we have
	\begin{align*}
		&
		\tuple{g_t,\hat{x}_t-x^*}
		\\
		~\leq~&
		\half \norm{\hat{x}_t-x_{t+1}}_{\del{t}}^2 + \half \norm{g_t - \tilde{g}_{t}}_{\del{t},*}^2 
		+
		\bregmanadareg{x^*}{x_t} - \bregmanadareg{x^*}{x_{t+1}} - \bregmanadareg{x_{t+1}}{\hat{x}_t}
		\\
		~\leq~&
		\half \norm{g_t - \tilde{g}_{t}}_{\del{t},*}^2 + \bregmanadareg{x^*}{x_t} - \bregmanadareg{x^*}{x_{t+1}}
	\end{align*}
	where the second inequality is due to 1-strong convexity of $r_{0:t}$ w.r.t. $\norm{\cdot}_{\del{t}}$.
\end{proof}

The following lemma is already proven by \cite{chiang2012online} and used in the proof of our Theorem~\ref{ada-opt-omd-strong-cvx-theorem}.
\begin{lemma}
	\label{point-loss-connection}
	For Algorithm~\ref{algo:ada-opt-omd} we have, $\norm{\hat{x}_t - x_{t+1}}_{\del{t}} ~\leq~ \norm{g_t - \tilde{g}_t}_{\del{t},*}$.
\end{lemma}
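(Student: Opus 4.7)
The plan is to apply the first-order optimality condition (as already stated in the proof of Lemma~\ref{ada-opt-omd-linear-regret-lemma}) to both $x_{t+1}$ and $\hat{x}_t$ and combine the resulting inequalities. The key observation is that both points are minimizers of a linearly-perturbed Bregman proximity problem anchored at the \emph{same} center $x_t$ with the \emph{same} regularizer $r_{0:t}$: writing the second-stage update \eqref{ada-omd-opt-eq2} at time $t-1$ gives $\hat{x}_t = \argmin_{x \in \Omega} \tuple{\tilde{g}_t,x} + \bregmanadareg{x}{x_t}$, while the first-stage update \eqref{ada-omd-opt-eq1} gives $x_{t+1} = \argmin_{x \in \Omega} \tuple{g_t,x} + \bregmanadareg{x}{x_t}$. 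So the only difference between the two subproblems is the linear term ($g_t$ versus $\tilde{g}_t$), which is exactly what we want to convert into a bound on $\norm{\hat{x}_t - x_{t+1}}_{\del{t}}$.

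First I would invoke the first-order optimality condition for $x_{t+1}$ with the test point $z = \hat{x}_t$, and then for $\hat{x}_t$ with the test point $z = x_{t+1}$. Adding these two inequalities, the three single-sided Bregman terms anchored at $x_t$ (namely $\bregmanadareg{\hat{x}_t}{x_t}$ and $\bregmanadareg{x_{t+1}}{x_t}$) cancel in pairs, leaving
\[
   \tuple{\hat{x}_t - x_{t+1},\, g_t - \tilde{g}_t} \;\geq\; \bregmanadareg{\hat{x}_t}{x_{t+1}} + \bregmanadareg{x_{t+1}}{\hat{x}_t}.
\]

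Second, I would lower-bound each Bregman term on the right by $\tfrac{1}{2}\norm{\hat{x}_t - x_{t+1}}_{\del{t}}^2$ using the $1$-strong convexity of $r_{0:t}$ with respect to $\norm{\cdot}_{\del{t}}$. This yields $\norm{\hat{x}_t - x_{t+1}}_{\del{t}}^2 \leq \tuple{\hat{x}_t - x_{t+1},\, g_t - \tilde{g}_t}$. Applying the generalized Cauchy--Schwarz inequality $\tuple{a,b} \leq \norm{a}_{\del{t}} \norm{b}_{\del{t},*}$ to the right-hand side and dividing through by $\norm{\hat{x}_t - x_{t+1}}_{\del{t}}$ (the claim is trivial if this is zero) gives the desired inequality.

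I do not expect a serious obstacle; the only subtlety is bookkeeping, namely recognizing that the relevant description of $\hat{x}_t$ comes from the previous round's equation \eqref{ada-omd-opt-eq2} so that the anchor and the regularizer really do match those of $x_{t+1}$, which is what allows the pairwise cancellation when the two optimality conditions are summed.
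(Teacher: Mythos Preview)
Your proof is correct. The route differs from the paper's in a useful way, so a short comparison is warranted.

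You work directly with the three-point optimality inequality already quoted in the proof of Lemma~\ref{ada-opt-omd-linear-regret-lemma}, applying it once to the subproblem defining $x_{t+1}$ (linear term $g_t$) and once to the subproblem defining $\hat{x}_t$ (linear term $\tilde g_t$), both anchored at $x_t$ with regularizer $r_{0:t}$. Summing cancels the $\bregmanadareg{\cdot}{x_t}$ terms and leaves $\tuple{\hat{x}_t-x_{t+1},\,g_t-\tilde g_t}\ge \bregmanadareg{\hat{x}_t}{x_{t+1}}+\bregmanadareg{x_{t+1}}{\hat{x}_t}$, after which strong convexity and Cauchy--Schwarz finish the job.

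The paper instead introduces auxiliary ``unconstrained'' points $y_{t+1},\hat{y}_t$ via $\regderivative{y_{t+1}}=\regderivative{x_t}-g_t$ and $\regderivative{\hat{y}_t}=\regderivative{x_t}-\tilde g_t$, invokes Proposition~\ref{simple-form-omd} to rewrite $x_{t+1}$ and $\hat{x}_t$ as pure Bregman projections of $y_{t+1}$ and $\hat{y}_t$, and then uses the (simpler) first-order condition for those projections. The difference $\regderivative{\hat{y}_t}-\regderivative{y_{t+1}}$ then collapses to $g_t-\tilde g_t$ by construction. Your argument is more elementary and self-contained, since it avoids Proposition~\ref{simple-form-omd} and the auxiliary points altogether; the paper's version, on the other hand, makes the two-step ``dual shift then project'' structure of mirror descent explicit, which can be convenient when one later wants closed-form updates. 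Both hinge on the same bookkeeping observation you flagged: that $\hat{x}_t$ (from \eqref{ada-omd-opt-eq2} at round $t-1$) and $x_{t+1}$ (from \eqref{ada-omd-opt-eq1} at round $t$) share the same center $x_t$ and the same cumulative regularizer $r_{0:t}$.
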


The following regret bound holds for Algorithm~\ref{algo:ada-opt-omd} with a sequence of general convex functions $f_t$'s:
\begin{theorem}
	\label{ada-omd-optimistic-thm}
	The regret of Algorithm~\ref{algo:ada-opt-omd} w.r.t. any $x^* \in \Omega$ is bounded by 
	\begin{equation*}
		\sum_{t=1}^{T}{f_t \del{\hat{x}_t} - f_t \del{x^*}} ~\leq~ \half \sum_{t=1}^{T}{\norm{g_t - \tilde{g}_{t}}_{\del{t},*}^2} 
		+ \sum_{t=1}^{T}{\bregmanadaregsingle{x^*}{x_{t}}{t}} + \bregmanadaregsingle{x^*}{x_1}{0} - \bregmanadaregcumalative{x^*}{x_{T+1}}{T}.
	\end{equation*}
\end{theorem}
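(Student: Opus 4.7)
The plan is to linearize the regret using convexity of $f_t$ and then sum the per-step bound from Lemma~\ref{ada-opt-omd-linear-regret-lemma}. Specifically, since $g_t \in \partial f_t(\hat{x}_t)$, convexity gives $f_t(\hat{x}_t) - f_t(x^*) \leq \langle g_t, \hat{x}_t - x^*\rangle$, so the total regret is at most $\sum_{t=1}^T \langle g_t, \hat{x}_t - x^*\rangle$. Applying Lemma~\ref{ada-opt-omd-linear-regret-lemma} to each term, this is bounded by
\[
\tfrac{1}{2}\sum_{t=1}^{T}\norm{g_t - \tilde{g}_t}_{\del{t},*}^2 + \sum_{t=1}^{T}\sbr{\bregmanadaregcumalative{x^*}{x_t}{t} - \bregmanadaregcumalative{x^*}{x_{t+1}}{t}}.
\]
The noise term already matches the first term of the theorem, so the remaining work is to reduce the Bregman sum to the telescoped form on the right-hand side.

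The key (and really only) obstacle is that this Bregman sum is not a clean telescope: the divergence on both sides of the bracket is taken with respect to $r_{0:t}$, whose regularizer changes with $t$. The trick is to reindex one of the two sums by one step, then use linearity of Bregman divergences to extract the increment. Concretely, I would split
\[
\sum_{t=1}^{T}\bregmanadaregcumalative{x^*}{x_t}{t} - \sum_{t=1}^{T}\bregmanadaregcumalative{x^*}{x_{t+1}}{t}
\]
and shift the second sum to index from $t=2$ to $T+1$ using $r_{0:t-1}$ in place of $r_{0:t}$. The $t=1$ leftover of the first sum becomes $\bregmanadaregcumalative{x^*}{x_1}{1} = \bregmanadaregsingle{x^*}{x_1}{0} + \bregmanadaregsingle{x^*}{x_1}{1}$ (by linearity, since $r_{0:1} = r_0 + r_1$), and the $t=T+1$ leftover of the shifted second sum is $-\bregmanadaregcumalative{x^*}{x_{T+1}}{T}$.

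For the remaining overlapping indices $t = 2,\ldots,T$, the paired difference $\bregmanadaregcumalative{x^*}{x_t}{t} - \bregmanadaregcumalative{x^*}{x_t}{t-1}$ collapses, again by linearity of Bregman divergence ($\mathcal{B}_{f+g} = \mathcal{B}_f + \mathcal{B}_g$), to $\bregmanadaregsingle{x^*}{x_t}{t}$. Combining this with the two leftover boundary terms yields
\[
\bregmanadaregsingle{x^*}{x_1}{0} + \sum_{t=1}^{T}\bregmanadaregsingle{x^*}{x_t}{t} - \bregmanadaregcumalative{x^*}{x_{T+1}}{T},
\]
which is precisely the bracketed portion of the claimed bound. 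Adding back the $\tfrac{1}{2}\sum_t \norm{g_t-\tilde g_t}_{(t),*}^2$ term completes the proof. No further steps are required beyond careful bookkeeping of indices.
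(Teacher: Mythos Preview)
Your proposal is correct and follows essentially the same route as the paper: linearize via convexity, apply Lemma~\ref{ada-opt-omd-linear-regret-lemma} term by term, then telescope the Bregman sum using linearity of $\mathcal{B}$ in the regularizer. The paper simply asserts the telescoped identity for $\sum_{t}\bigl(\bregmanadareg{x^*}{x_t}-\bregmanadareg{x^*}{x_{t+1}}\bigr)$ without the index-shifting details you wrote out, but the argument is identical.
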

\begin{proof} 
	Consider
	\begin{align*}
		&
		\sum_{t=1}^{T}{f_t \del{\hat{x}_t} - f_t \del{x^*}}
		\\
		~\leq~&
		\sum_{t=1}^{T}{\tuple{g_t,\hat{x}_t-x^*}}
		\\
		~\leq~&
		\sum_{t=1}^{T}{\half \norm{g_t - \tilde{g}_t}_{\del{t},*}^2 + \bregmanadareg{x^*}{x_t} - \bregmanadareg{x^*}{x_{t+1}}},
	\end{align*}
	where the first inequality is due to the convexity of $f_t$ and the second one is due to Lemma~\ref{ada-opt-omd-linear-regret-lemma}. Then the following simplification of the sum of Bregman divergence terms completes the proof.
	\begin{align*}
		& 
		\sum_{t=1}^{T}{\bregmanadareg{x^*}{x_t} - \bregmanadareg{x^*}{x_{t+1}}}
		\\
		~=~&
		\bregmanadaregsingle{x^*}{x_1}{0} - \bregmanadaregcumalative{x^*}{x_{T+1}}{T} + \sum_{t=1}^{T}{\bregmanadaregsingle{x^*}{x_{t}}{t}} 
	\end{align*}
\end{proof}

Now we analyse the performance of Algorithm~\ref{algo:ada-opt-omd} with specific choices of regularizer sequences. First we recover the non-adaptive optimistic mirror descent \cite{chiang2012online} and its regret bound as a corollary of Theorem~\ref{ada-omd-optimistic-thm}.
\begin{corollary}
	\label{ada-omd-optimistic-cor}
	Given $1$-strongly convex (w.r.t. $\norm{\cdot}$) function $\psi$, define $\mathcal{R}_{\text{max}} \del{x^*} := \max_{x \in \Omega}{\bregmanadaregpsi{x^*}{x}} - \min_{x \in \Omega}{\bregmanadaregpsi{x^*}{x}}$. If $r_t$'s are given by $r_0 \del{x} = \frac{1}{\eta} \psi \del{x}$ (for $\eta > 0$) and $r_t \del{x} = 0, ~\forall{t \geq 1}$, then the regret of Algorithm~\ref{algo:ada-opt-omd} w.r.t. any $x^* \in \Omega$ is bounded as follows 
	\begin{equation*}
		\sum_{t=1}^{T}{f_t \del{\hat{x}_t} - f_t \del{x^*}} ~\leq~ \frac{\eta}{2} \sum_{t=1}^{T}{\norm{g_t - \tilde{g}_{t}}_{*}^2} + \frac{1}{\eta} \mathcal{R}_{\text{max}} \del{x^*}.
	\end{equation*}
	Further if $\sum_{t=1}^{T}{\norm{g_t - \tilde{g}_{t}}_{*}^2} \leq Q$, then by choosing $\eta = \sqrt{\frac{2 \mathcal{R}_{\text{max}} \del{x^*}}{Q}}$, we have 
	\begin{equation*}
		\sum_{t=1}^{T}{f_t \del{\hat{x}_t} - f_t \del{x^*}} ~\leq~ \sqrt{2 \mathcal{R}_{\text{max}} \del{x^*} Q}.
	\end{equation*}
\end{corollary}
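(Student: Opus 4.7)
The plan is to specialize Theorem~\ref{ada-omd-optimistic-thm} to this degenerate regularizer sequence and then control the Bregman terms by the diameter quantity $\mathcal{R}_{\text{max}}(x^*)$. First I would unpack what the hypothesis means: since $r_0(x) = \tfrac{1}{\eta}\psi(x)$ and $r_t \equiv 0$ for $t \geq 1$, the cumulative regularizer $r_{0:t} = \tfrac{1}{\eta}\psi$ is independent of $t$. By the assumed $1$-strong convexity of $\psi$ with respect to $\norm{\cdot}$ and the linearity of Bregman divergence, $\tfrac{1}{\eta}\psi$ is $\tfrac{1}{\eta}$-strongly convex w.r.t.\ $\norm{\cdot}$; to match the normalization required by Algorithm~\ref{algo:ada-opt-omd} (namely $1$-strong convexity of $r_{0:t}$ w.r.t.\ $\norm{\cdot}_{(t)}$) the implicit time-varying norm is $\norm{\cdot}_{(t)} = \tfrac{1}{\sqrt{\eta}}\norm{\cdot}$, whose dual satisfies $\norm{\cdot}_{(t),*}^2 = \eta\norm{\cdot}_*^2$.

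Next I would plug directly into Theorem~\ref{ada-omd-optimistic-thm}. The instantaneous regularizer terms $\mathcal{B}_{r_t}(x^*,x_t)$ vanish for all $t \geq 1$ because $r_t = 0$, while the boundary terms become $\mathcal{B}_{r_0}(x^*,x_1) = \tfrac{1}{\eta}\mathcal{B}_\psi(x^*,x_1)$ and $\mathcal{B}_{r_{0:T}}(x^*,x_{T+1}) = \tfrac{1}{\eta}\mathcal{B}_\psi(x^*,x_{T+1})$. Combined with the rewriting of the dual norms above, the bound of Theorem~\ref{ada-omd-optimistic-thm} collapses to
\[
\sum_{t=1}^{T}\bigl(f_t(\hat{x}_t) - f_t(x^*)\bigr) \;\leq\; \tfrac{\eta}{2}\sum_{t=1}^{T}\norm{g_t - \tilde{g}_t}_*^2 \;+\; \tfrac{1}{\eta}\bigl(\mathcal{B}_\psi(x^*,x_1) - \mathcal{B}_\psi(x^*,x_{T+1})\bigr).
\]
The difference of the two Bregman terms is at most $\max_{x \in \Omega}\mathcal{B}_\psi(x^*,x) - \min_{x \in \Omega}\mathcal{B}_\psi(x^*,x) = \mathcal{R}_{\text{max}}(x^*)$, which yields the first claim.

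For the second claim I would substitute the upper bound $Q$ for the sum of squared dual norms, giving a bound of the form $\tfrac{\eta}{2}Q + \tfrac{1}{\eta}\mathcal{R}_{\text{max}}(x^*)$. This is a standard AM--GM (or calculus) minimization in $\eta$: the optimal value $\eta = \sqrt{2\mathcal{R}_{\text{max}}(x^*)/Q}$ balances the two terms and produces $\sqrt{2\mathcal{R}_{\text{max}}(x^*)Q}$. There is no real obstacle here; the only subtlety worth double-checking is the $\sqrt{\eta}$ rescaling of the primal/dual norms so that the $\tfrac{\eta}{2}$ factor emerges correctly in front of $\sum \norm{g_t-\tilde{g}_t}_*^2$.
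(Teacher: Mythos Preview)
Your proposal is correct and mirrors the paper's proof exactly: both identify $r_{0:t}=\tfrac{1}{\eta}\psi$, read off $\norm{\cdot}_{(t)}=\tfrac{1}{\sqrt{\eta}}\norm{\cdot}$ and $\norm{\cdot}_{(t),*}=\sqrt{\eta}\,\norm{\cdot}_*$, and then invoke Theorem~\ref{ada-omd-optimistic-thm}. Your write-up is in fact more explicit than the paper's, which simply says the corollary ``directly follows'' without spelling out the vanishing of the $\mathcal{B}_{r_t}$ terms or the $\mathcal{R}_{\text{max}}$ bound on the boundary Bregman difference.
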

\begin{proof} 
	For the given choice of regularizers, we have $r_{0:t} \del{x} = \frac{1}{\eta} \psi \del{x}$ and $\bregmanadareg{x}{y} = \frac{1}{\eta} \bregmanadaregpsi{x}{y}$. Since $r_{0:t}$ is $1$-strongly convex w.r.t. $\frac{1}{\sqrt{\eta}} \norm{\cdot}$, we have $\norm{\cdot}_{\del{t}} = \frac{1}{\sqrt{\eta}} \norm{\cdot}$ and $\norm{\cdot}_{\del{t},*} = \sqrt{\eta} \norm{\cdot}_{*}$. Then the corollary directly follows from Theorem~\ref{ada-omd-optimistic-thm}.
\end{proof}
In this non-adaptive case we need to know an upper bound of $\sum_{t=1}^{T}{\norm{g_t - \tilde{g}_{t}}_{*}^2}$ in advance to choose the optimal value for $\eta$. Instead we can employ the standard doubling trick to obtain similar bounds with slightly worst constants. 

By leveraging the techniques from \cite{duchi2011adaptive} we can adaptively construct regularizers based on the observed data. The following corollary describes a regularizer construction scheme for Algorithm~\ref{algo:ada-opt-omd} which is fully adaptive and achieves a regret guarantee that holds at anytime.
\begin{corollary}
	\label{diagonal-matrix-regret}
	Given $\Omega \subseteq \times_{i=1}^{n} \sbr{-R_i,R_i}$, let
	\begin{align}
		G_0 &~=~ 0
		\label{ada-mat-g0}
		\\
		G_1 &~=~ \gamma^2 I \,\text{ s.t. }\, \gamma^2 I \succcurlyeq \del{g_t - \tilde{g}_t} \del{g_t - \tilde{g}_t}^T, \, \forall{t} 
		\label{ada-mat-g1}
		\\
		G_t &~=~ \del{g_{t-1} - \tilde{g}_{t-1}} \del{g_{t-1} - \tilde{g}_{t-1}}^T, \, \forall{t \geq 2} 
		\label{ada-mat-gt}
		\\
		Q_{1:t} &~=~ \text{diag} \del*{\frac{1}{R_1},...,\frac{1}{R_n}} \text{diag} \del*{G_{1:t}}^{\half}.
		\nonumber
	\end{align}
	If $r_t$'s are given by $r_0 \del*{x} = 0$ and $r_t \del*{x} = \frac{1}{2 \sqrt{2}} \norm{x}_{Q_t}^2$, then the regret of Algorithm~\ref{algo:ada-opt-omd} w.r.t. any $x^* \in \Omega$ is bounded by 
	\begin{equation*}
		\sum_{t=1}^{T}{f_t \del{\hat{x}_t} - f_t \del{x^*}} \leq 2 \sqrt{2} \sum_{i=1}^{n}{R_i \sqrt{\gamma^2 + \sum_{t=1}^{T-1}{\del{g_{t,i} - \tilde{g}_{t,i}}^2}}}.
	\end{equation*}
\end{corollary}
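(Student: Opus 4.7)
The plan is to apply Theorem~\ref{ada-omd-optimistic-thm} and estimate the resulting quantities using the quadratic structure of the chosen regularizers. Because $r_0 \equiv 0$, the term $\bregmanadaregsingle{x^*}{x_1}{0}$ vanishes, and nonnegativity of Bregman divergence lets us drop $-\bregmanadaregcumalative{x^*}{x_{T+1}}{T}$, so it suffices to control $\half \sum_{t=1}^T \norm{g_t - \tilde{g}_t}_{\del{t},*}^2$ and $\sum_{t=1}^T \bregmanadaregsingle{x^*}{x_t}{t}$. Writing $Q_t := Q_{1:t} - Q_{1:t-1}$, summing the regularizers gives the quadratic $r_{0:t}\del{x} = \frac{1}{2\sqrt{2}} x^T Q_{1:t} x$, whose Hessian is $\frac{1}{\sqrt{2}} Q_{1:t}$; this is $1$-strongly convex w.r.t.\ $\norm{x}_{\del{t}}^2 := \frac{1}{\sqrt{2}} x^T Q_{1:t} x$, and since $Q_{1:t}$ is diagonal with $\del{Q_{1:t}}_{ii} = \sqrt{G_{1:t,ii}}/R_i$, the dual satisfies $\norm{y}_{\del{t},*}^2 = \sqrt{2} \sum_{i=1}^{n} R_i y_i^2 / \sqrt{G_{1:t,ii}}$.

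Next I would estimate the Bregman sum. Quadraticity yields $\bregmanadaregsingle{x^*}{x_t}{t} = \frac{1}{2\sqrt{2}} \del{x^* - x_t}^T Q_t \del{x^* - x_t}$, and the box constraint $\Omega \subseteq \times_{i=1}^{n} \sbr{-R_i, R_i}$ forces $\del{x^*_i - x_{t,i}}^2 \leq 4 R_i^2$. Using $\del{Q_t}_{ii} = \del{\sqrt{G_{1:t,ii}} - \sqrt{G_{1:t-1,ii}}}/R_i$ the sum telescopes over $t = 1, \dots, T$ to $\sqrt{2} \sum_{i=1}^{n} R_i \sqrt{G_{1:T,ii}}$.

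For the dual-norm sum I would invoke an AdaGrad-style argument. The condition \eqref{ada-mat-g1} implies $\del{g_{t,i} - \tilde{g}_{t,i}}^2 \leq \gamma^2 \leq G_{1:t,ii}$, hence $G_{1:t+1,ii} \leq 2\, G_{1:t,ii}$. Combining the concavity inequality $y / \sqrt{x+y} \leq 2 \del{\sqrt{x+y} - \sqrt{x}}$ with this ratio bound to trade $\sqrt{G_{1:t,ii}}$ for $\sqrt{G_{1:t+1,ii}}$ lets $\sum_{t=1}^T \del{g_{t,i} - \tilde{g}_{t,i}}^2 / \sqrt{G_{1:t,ii}}$ telescope to a constant multiple of $\sqrt{G_{1:T,ii}}$. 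Adding this to the Bregman bound yields the stated inequality.

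The hard part will be tight constant tracking in this last step. The denominator $\sqrt{G_{1:t,ii}}$ lags the summand by one index (it contains gradient differences only through time $t-1$), so the natural telescope produces $\sqrt{G_{1:T+1,ii}}$ rather than $\sqrt{G_{1:T,ii}}$. Handling this off-by-one via $G_{1:T+1,ii} \leq 2\, G_{1:T,ii}$ and then balancing the resulting $\sqrt{2}$ against the $\sqrt{2}$ arising from the strong-convexity dualization is what must be done carefully to reach the advertised constant $2\sqrt{2}$.
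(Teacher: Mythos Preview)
Your overall plan matches the paper's proof exactly: apply Theorem~\ref{ada-omd-optimistic-thm}, drop the two nonpositive boundary terms, identify $\norm{\cdot}_{\del{t}}$ and its dual via the quadratic $r_{0:t}\del{x}=\tfrac{1}{2\sqrt{2}}\norm{x}_{Q_{1:t}}^2$, bound the Bregman sum by $\sqrt{2}\sum_i R_i\sqrt{G_{1:T,ii}}$ using the box constraint and the telescoping of $\del{Q_t}_{ii}$, and bound the dual-norm sum by an AdaGrad-type lemma. All of that is correct and identical to the paper.

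The only divergence is in the last step, and it costs you the constant. Your route---replace $\sqrt{G_{1:t,ii}}$ in the denominator by $\tfrac{1}{\sqrt{2}}\sqrt{G_{1:t+1,ii}}$ via $G_{1:t+1,ii}\le 2G_{1:t,ii}$, then telescope---incurs a factor $\sqrt{2}$ before telescoping and lands on $\sqrt{G_{1:T+1,ii}}$; converting that back to $\sqrt{G_{1:T,ii}}$ via the same inequality costs another $\sqrt{2}$. The dual-norm sum then contributes $2\sqrt{2}\sum_i R_i\sqrt{G_{1:T,ii}}$, and adding the Bregman piece gives $3\sqrt{2}$, not $2\sqrt{2}$. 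The paper avoids this loss by a sharper use of the same hypothesis $\gamma^2\ge\del{g_{t,i}-\tilde g_{t,i}}^2$: instead of the crude ratio bound, it observes directly that
\[
G_{1:t,ii}=\gamma^2+\sum_{s=1}^{t-1}\del{g_{s,i}-\tilde g_{s,i}}^2\ \ge\ \sum_{s=1}^{t}\del{g_{s,i}-\tilde g_{s,i}}^2,
\]
so the denominator can be replaced by $\sqrt{\sum_{s\le t}\del{g_{s,i}-\tilde g_{s,i}}^2}$ with \emph{no} multiplicative loss. The standard lemma then gives $2\sqrt{\sum_{s\le T}\del{g_{s,i}-\tilde g_{s,i}}^2}$, and one more application of $\del{g_{T,i}-\tilde g_{T,i}}^2\le\gamma^2$ yields $2\sqrt{G_{1:T,ii}}$, again with no loss. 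The dual-norm sum is therefore $\sqrt{2}\sum_i R_i\sqrt{G_{1:T,ii}}$, which combines with the Bregman term to give exactly $2\sqrt{2}$. Swap your ratio-bound detour for this direct comparison and the constant falls out as stated.
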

\begin{proof} 
	By letting $\eta = \sqrt{2}$ for the given sequence of regularizers, we get $r_{0:t} \del*{x} = \frac{1}{2 \eta} \norm{x}_{Q_{1:t}}^2$. Since $r_{0:t}$ is $1$-strongly convex w.r.t. $\frac{1}{\sqrt{\eta}} \norm{\cdot}_{Q_{1:t}}$, we have $\norm{\cdot}_{\del*{t}} = \frac{1}{\sqrt{\eta}} \norm{\cdot}_{Q_{1:t}}$ and $\norm{\cdot}_{\del*{t},*} = \sqrt{\eta} \norm{\cdot}_{Q_{1:t}^{-1}}$. By using the facts that $\text{diag} \del*{\alpha_1,...,\alpha_n}^{\half} = \text{diag} \del*{\sqrt{\alpha_1},...,\sqrt{\alpha_n}}$ and $\text{diag} \del*{\beta_1,...,\beta_n} \cdot \text{diag} \del*{\gamma_1,...,\gamma_n} = \text{diag} \del*{\beta_1 \gamma_1,...,\beta_n \gamma_n}$, the $(i,i)$-th entry of the diagonal matrix $Q_{1:t}$ can be given as
	\begin{align*}
		\del*{Q_{1:t}}_{ii} =~& \frac{1}{R_i} \sqrt{\text{diag} \del*{\gamma^2 I + \sum_{s=1}^{t-1}{\del{g_s - \tilde{g}_s} \del{g_s - \tilde{g}_s}^T}}_{ii}}
		\\ 
		=~& \frac{1}{R_i} \sqrt{\gamma^2 + \sum_{s=1}^{t-1}{\del{g_{s,i} - \tilde{g}_{s,i}}^2}}.
	\end{align*}
	
	Now by Theorem~\ref{ada-omd-optimistic-thm} the regret bound of Algorithm~\ref{algo:ada-opt-omd} with this choice of regularizer sequence can be given as follows
	\begin{equation*}
		\sum_{t=1}^{T}{f_t \del{\hat{x}_t} - f_t \del{x^*}} ~\leq~ \half \sum_{t=1}^{T}{\norm{g_t - \tilde{g}_{t}}_{\del{t},*}^2} + \sum_{t=1}^{T}{\bregmanadaregsingle{x^*}{x_{t}}{t}}.
	\end{equation*}
	Consider
	\begin{align*}
		& \half \sum_{t=1}^{T}{\norm{g_t - \tilde{g}_{t}}_{\del{t},*}^2} 
		\\
		~=~&
		\half \sum_{t=1}^{T}{\eta \norm{g_t - \tilde{g}_{t}}_{Q_{1:t}^{-1}}^2}
		\\
		~=~&
		\frac{\eta}{2} \sum_{t=1}^{T}{\sum_{i=1}^{n}{\del*{g_{t,i} - \tilde{g}_{t,i}}^2} \del*{Q_{1:t}}_{ii}^{-1}}
		\\
		~=~&
		\frac{\eta}{2} \sum_{t=1}^{T}{\sum_{i=1}^{n}{\del*{g_{t,i} - \tilde{g}_{t,i}}^2} \frac{R_i}{\sqrt{\gamma^2 + \sum_{s=1}^{t-1}{\del{g_{s,i} - \tilde{g}_{s,i}}^2}}}}
		\\
		~\leq~&
		\frac{\eta}{2} \sum_{i=1}^{n}{R_i \sum_{t=1}^{T}{\frac{\del*{g_{t,i} - \tilde{g}_{t,i}}^2}{\sqrt{\sum_{s=1}^{t}{\del{g_{s,i} - \tilde{g}_{s,i}}^2}}}}}
		\\
		~\leq~&
		\eta \sum_{i=1}^{n}{R_i \sqrt{\sum_{t=1}^{T}{\del{g_{t,i} - \tilde{g}_{t,i}}^2}}}
		\\
		~\leq~&
		\eta \sum_{i=1}^{n}{R_i \sqrt{\gamma^2 + \sum_{t=1}^{T-1}{\del{g_{t,i} - \tilde{g}_{t,i}}^2}}},
	\end{align*}
	where the first and third inequalities are due to the fact that $\gamma^2 \geq \del{g_{t,i} - \tilde{g}_{t,i}}^2$ for all $t \in \sbr{T}$, and the second inequality is due to the fact that for any non-negative real numbers $a_1, a_2, ... , a_n$ : $\sum_{i=1}^{n}{\frac{a_i}{\sqrt{\sum_{j=1}^{i}{a_j}}}} \leq 2 \sqrt{\sum_{i=1}^{n}{a_i}}$.
	Also observing that
	\begin{align*}
		& \sum_{t=1}^{T}{\bregmanadaregsingle{x^*}{x_{t}}{t}} 
		\\
		~=~&
		\sum_{t=1}^{T}{\frac{1}{2 \eta} \norm{x^* - x_t}_{Q_t}^2}
		\\
		~=~&
		\frac{1}{2 \eta} \sum_{t=1}^{T}{\sum_{i=1}^{n}{\del*{x_i^* - x_{t,i}}^2 \del*{Q_t}_{ii}}}
		\\
		~\leq~&
		\frac{1}{2 \eta} \sum_{i=1}^{n}{\del*{2 R_i}^2 \sum_{t=1}^{T}{\del*{Q_t}_{ii}}}
		\\
		~=~&
		\frac{2}{\eta} \sum_{i=1}^{n}{R_i^2 \del*{Q_{1:T}}_{ii}}
		\\
		~=~&
		\frac{2}{\eta} \sum_{i=1}^{n}{R_i \sqrt{\gamma^2 + \sum_{t=1}^{T-1}{\del{g_{t,i} - \tilde{g}_{t,i}}^2}}}
	\end{align*}
	completes the proof.
\end{proof}

The regret bound obtained in the above corollary is much tighter than that of \cite{duchi2011adaptive} and \cite{chiang2012online} when the sequence of loss functions are sparse and predictable. Since we are using per-coordinate learning rates implicitly we get better bounds for the case where only certain coordinates of the gradients are accurately predictable as well. Even when the loss sequence is completely unpredictable, the above bound is not much worse than a constant factor of the bound in \cite{duchi2011adaptive}.  

By using Proposition~\ref{simple-form-omd} we can derive explicit forms of the update rules given by \eqref{ada-omd-opt-eq1} and \eqref{ada-omd-opt-eq2} with regularizers constructed in Corollary~\ref{diagonal-matrix-regret}. For $y_{t+1} = x_{t} - \sqrt{2} Q_{1:t}^{-1} g_t$ and $\hat{y}_{t+1} = x_{t+1} - \sqrt{2} Q_{1:t+1}^{-1} \tilde{g}_{t+1}$, the updates \eqref{ada-omd-opt-eq1} and \eqref{ada-omd-opt-eq2} can be given as $x_{t+1} = \minimize \frac{1}{2} \norm{x - y_{t+1}}_{Q_{1:t}}^2$ and $\hat{x}_{t+1} = \minimize \frac{1}{2} \norm{x - \hat{y}_{t+1}}_{Q_{1:t+1}}^2$ respectively.

The next corollary explains a regularizer construction method with full matrix learning rates, which is an extension of Corollary~\ref{diagonal-matrix-regret}. But this approach is computationally not preferable, especially in high dimensions, as it costs $O\del{n^2}$ per round of operations.
\begin{corollary}
	\label{full-matrix-regret}
	Define $D := \maximizepairvalue{\norm{x-y}_2}$. Let $Q_{1:t} = \del*{G_{1:t}}^{\half}$, where $G_t$'s are given by \eqref{ada-mat-g0},\eqref{ada-mat-g1} and \eqref{ada-mat-gt}. If $r_t$'s are given by $r_0 \del*{x} = 0$ and $r_t \del*{x} = \frac{1}{\sqrt{2}D} \norm{x}_{Q_t}^2$, then the regret of Algorithm~\ref{algo:ada-opt-omd} w.r.t. any $x^* \in \Omega$ is bounded by 
	\begin{equation*}
		\sum_{t=1}^{T}{f_t \del{\hat{x}_t} - f_t \del{x^*}} \leq \sqrt{2} D \, \text{tr} \del*{Q_{1:T}}.
	\end{equation*}
\end{corollary}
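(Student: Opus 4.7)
The plan is to instantiate Theorem~\ref{ada-omd-optimistic-thm} with the stated regularizer sequence and bound the two resulting sums using a matrix-valued AdaGrad-style telescoping lemma together with the $\ell_2$ diameter $D$. First, observe that $r_{0:t}(x) = \frac{1}{\sqrt{2}\,D}\|x\|_{Q_{1:t}}^2$ has Hessian $\frac{\sqrt{2}}{D}Q_{1:t}$, and a computation exactly as in the proof of Corollary~\ref{diagonal-matrix-regret} shows that $r_{0:t}$ is $1$-strongly convex with respect to $\|\cdot\|_{(t)}^2 = \frac{\sqrt{2}}{D}\|\cdot\|_{Q_{1:t}}^2$, so the dual satisfies $\|\cdot\|_{(t),*}^2 = \frac{D}{\sqrt{2}}\|\cdot\|_{Q_{1:t}^{-1}}^2$. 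Plugging these into Theorem~\ref{ada-omd-optimistic-thm}, using $r_0\equiv 0$ to kill the $\mathcal{B}_{r_0}(x^*,x_1)$ term and discarding the nonpositive terminal divergence, reduces the regret to the two sums
\[
\frac{D}{2\sqrt{2}}\sum_{t=1}^{T}\|g_t-\tilde g_t\|_{Q_{1:t}^{-1}}^2 \quad \text{and} \quad \frac{1}{\sqrt{2}\,D}\sum_{t=1}^{T}\|x^*-x_t\|_{Q_t}^2.
\]

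For the first sum I would invoke the full-matrix AdaGrad telescoping identity (e.g.\ Lemma~10 of Duchi, Hazan and Singer): writing $h_t := g_t-\tilde g_t$ and $A_t := \gamma^2 I + \sum_{s\leq t}h_s h_s^T$, one has $A_t^{1/2}=Q_{1:t+1}$ and $\sum_{t}\langle h_t, A_t^{-1/2}h_t\rangle\leq 2\,\mathrm{tr}(Q_{1:T+1})$. Because the mirror-descent iterate uses the one-step-behind preconditioner $Q_{1:t}$ rather than the current $Q_{1:t+1}$, I would bridge with the Loewner comparison $Q_{1:t}^{-1}\preceq\sqrt{2}\,Q_{1:t+1}^{-1}$: the uniform bound \eqref{ada-mat-g1} gives $h_t h_t^T\preceq \gamma^2 I \preceq Q_{1:t}^2$, hence $Q_{1:t+1}^2\preceq 2\,Q_{1:t}^2$, and operator monotonicity of the matrix square root (and of $X\mapsto X^{-1}$ on the PSD cone) then yields the displayed inequality. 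Taking traces in the same relation also gives $\mathrm{tr}(Q_{1:T+1})\leq \sqrt{2}\,\mathrm{tr}(Q_{1:T})$, which lets me convert the Duchi-Hazan-Singer bound into one involving $\mathrm{tr}(Q_{1:T})$.

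For the second sum I would use the operator-norm estimate $\|x^*-x_t\|_{Q_t}^2 \leq \lambda_{\max}(Q_t)\|x^*-x_t\|_2^2 \leq D^2\,\mathrm{tr}(Q_t)$ together with the linearity $\sum_{t}\mathrm{tr}(Q_t)=\mathrm{tr}(Q_{1:T})$ to bound it by $\frac{D}{\sqrt{2}}\,\mathrm{tr}(Q_{1:T})$. Combining the two pieces delivers the announced $O(D\,\mathrm{tr}(Q_{1:T}))$ bound. The main obstacle is the off-by-one mismatch between the preconditioner $Q_{1:t}$ appearing in the mirror-descent update and the current-gradient accumulator $Q_{1:t+1}$ for which the matrix AdaGrad lemma is naturally stated; the uniform upper bound $h_t h_t^T\preceq \gamma^2 I$ hard-wired into $G_1$ is precisely what lets us cross that gap at the cost of a constant factor.
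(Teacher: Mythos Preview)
Your plan follows the paper's proof almost exactly: instantiate Theorem~\ref{ada-omd-optimistic-thm}, identify $\|\cdot\|_{(t),*}^2=\eta\|\cdot\|_{Q_{1:t}^{-1}}^2$ with $\eta=D/\sqrt{2}$, bound the Bregman sum via $\|x^*-x_t\|_{Q_t}^2\le \lambda_{\max}(Q_t)\|x^*-x_t\|_2^2\le D^2\,\mathrm{tr}(Q_t)$, and bound the dual-norm sum with the full-matrix AdaGrad lemma. The second sum is handled identically in both arguments and yields $\frac{D}{\sqrt{2}}\,\mathrm{tr}(Q_{1:T})$.

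The only substantive difference is in how you bridge the ``one-step-behind'' preconditioner $Q_{1:t}$ to the accumulator the AdaGrad lemma expects, and here your route loses a constant. You pass through $Q_{1:t}^{-1}\preceq\sqrt{2}\,Q_{1:t+1}^{-1}$ and then $\mathrm{tr}(Q_{1:T+1})\le\sqrt{2}\,\mathrm{tr}(Q_{1:T})$, picking up an extra factor of~$2$ in the first sum; your total comes out to $\frac{3}{\sqrt{2}}\,D\,\mathrm{tr}(Q_{1:T})$, not the stated $\sqrt{2}\,D\,\mathrm{tr}(Q_{1:T})$. The paper avoids this loss by a direct swap: since $Q_{1:t}^2=\gamma^2 I+G_{2:t}$ and $\gamma^2 I\succeq h_t h_t^T=G_{t+1}$, one has $Q_{1:t}^2\succeq G_{2:t+1}=\sum_{s=1}^t h_s h_s^T$, hence $Q_{1:t}^{-1}\preceq (G_{2:t+1})^{-1/2}$ \emph{with no multiplicative factor}. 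The Duchi--Hazan--Singer lemma then applies directly to $\sum_t h_t^T(G_{2:t+1})^{-1/2}h_t\le 2\,\mathrm{tr}(G_{2:T+1}^{1/2})$, and the reverse swap $G_{T+1}\preceq\gamma^2 I$ gives $G_{2:T+1}\preceq G_{1:T}$, so $\mathrm{tr}(G_{2:T+1}^{1/2})\le\mathrm{tr}(Q_{1:T})$. This yields $\frac{D}{\sqrt{2}}\,\mathrm{tr}(Q_{1:T})$ for the first sum, matching the second sum and giving exactly $\sqrt{2}\,D\,\mathrm{tr}(Q_{1:T})$.

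In short: your argument is sound and proves the $O(D\,\mathrm{tr}(Q_{1:T}))$ bound you claim, but to recover the exact constant in the corollary you should replace your doubling-based Loewner bridge by the paper's swap $\gamma^2 I\leftrightarrow G_{t+1}$, which shifts the index without any multiplicative penalty.
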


\section{Optimistic Mirror Descent with Curved Losses}
\label{improve-curvature}
The following theorem provides a regret bound of Algorithm~\ref{algo:ada-opt-omd} for the case where $f_t$ is $H_t$-strongly convex with respect to some general norm $\norm{\cdot}$. Since this theorem is an extension of Theorem~2.1 from \cite{hazan2007adaptive} for the Optimistic Mirror Descent, this inherits the properties mentioned there such as : $r_t$'s can be chosen without the knowledge of uniform lower bound on $H_t$'s, and $O \del{\log{T}}$ bound can be achieved even when some $H_t \leq 0$ as long as $\frac{H_{1:t}}{t} > 0$. 

\begin{theorem}
	\label{ada-opt-omd-strong-cvx-theorem}
	Let $f_t$ is $H_t$-strongly convex w.r.t. $\norm{\cdot}$ and $H_t \leq \gamma$ for all $t \in \sbr{T}$. If $r_t$'s are given by $r_0 \del{x}=0$, $r_1 \del{x}=\frac{\gamma}{4} \norm{x}^2$, and $r_t \del{x}=\frac{H_{t-1}}{4} \norm{x}^2$ for all $t \geq 2$,	then the regret of Algorithm~\ref{algo:ada-opt-omd} w.r.t. any $x^* \in \Omega$ is bounded by
	\begin{equation*}
		\sum_{t=1}^{T}{f_t \del{\hat{x}_t} - f_t \del{x^*}} ~\leq~ 3 \sum_{t=1}^{T}{\frac{\norm{g_t - \tilde{g}_t}_*^2}{H_{1:t}}} + \frac{\gamma}{4} \norm{x^*-x_1}^2.
	\end{equation*}
\end{theorem}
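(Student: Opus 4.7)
The plan is to refine the proof of Theorem~\ref{ada-omd-optimistic-thm} by exploiting the $H_t$-strong convexity of $f_t$. Instead of the bare convexity bound, I would start from the sharper inequality
\[
    f_t\del{\hat{x}_t} - f_t\del{x^*} ~\leq~ \tuple{g_t, \hat{x}_t - x^*} - \tfrac{H_t}{2}\norm{\hat{x}_t - x^*}^2.
\]
The proof of Lemma~\ref{ada-opt-omd-linear-regret-lemma} goes through unchanged for the linear term and gives $\tuple{g_t, \hat{x}_t - x^*} \leq \half \norm{g_t - \tilde{g}_t}_{\del{t},*}^2 + \bregmanadareg{x^*}{x_t} - \bregmanadareg{x^*}{x_{t+1}}$. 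For the chosen regularizers one has $r_{0:t}\del{x} = \tfrac{\gamma + H_{1:t-1}}{4}\norm{x}^2$, which is $\tfrac{\gamma + H_{1:t-1}}{2}$-strongly convex w.r.t.\ $\norm{\cdot}$, so $\norm{\cdot}_{\del{t},*}^2 = \tfrac{2}{\gamma + H_{1:t-1}}\norm{\cdot}_*^2$, and since $H_t \leq \gamma$ the factor $\tfrac{1}{\gamma + H_{1:t-1}}$ is at most $\tfrac{1}{H_{1:t}}$, yielding a ``$\tfrac{\norm{g_t-\tilde g_t}_*^2}{H_{1:t}}$'' contribution with coefficient $1$.

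Next, I would telescope the Bregman terms by linearity exactly as in Theorem~\ref{ada-omd-optimistic-thm}, drop the non-positive boundary term $-\bregmanadaregcumalative{x^*}{x_{T+1}}{T}$, and reindex to obtain
\[
    \tfrac{\gamma}{4}\norm{x^* - x_1}^2 + \sum_{t=1}^{T-1}\tfrac{H_t}{4}\norm{x^* - x_{t+1}}^2.
\]
The key step is to connect $\norm{x^* - x_{t+1}}$ to $\norm{\hat{x}_t - x^*}$ via the triangle-inequality relaxation $\norm{x^* - x_{t+1}}^2 \leq 2\norm{x^* - \hat{x}_t}^2 + 2\norm{\hat{x}_t - x_{t+1}}^2$, which splits the sum into two pieces.

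The $\tfrac{H_t}{2}\norm{x^* - \hat{x}_t}^2$ pieces are absorbed by the strong-convexity slack $\sum_{t=1}^T \tfrac{H_t}{2}\norm{\hat{x}_t - x^*}^2$ (leaving only a non-positive residual at $t=T$, in the Hazan--Agarwal--Kale spirit). The $\tfrac{H_t}{2}\norm{\hat{x}_t - x_{t+1}}^2$ pieces are controlled by Lemma~\ref{point-loss-connection}, which yields $\norm{\hat{x}_t - x_{t+1}}^2 \leq \tfrac{4}{(\gamma + H_{1:t-1})^2}\norm{g_t - \tilde g_t}_*^2$; applying $H_t \leq \gamma + H_{1:t-1}$ twice collapses this to $\tfrac{2}{H_{1:t}}\norm{g_t - \tilde g_t}_*^2$. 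Adding the three resulting contributions gives the announced bound with coefficient $1 + 2 = 3$.

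The main obstacle, beyond the bookkeeping, is the index mismatch between the Bregman telescope (which involves $x_t$, $x_{t+1}$) and the strong-convexity penalty (which is in $\hat{x}_t$); using $\hat{x}_t$ as a bridge through the triangle inequality and Lemma~\ref{point-loss-connection} is precisely what makes the argument close, and it is exactly this double-counting that forces the extra factor of $2$ and hence the constant $3$ in the final bound.
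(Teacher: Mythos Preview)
Your proposal is correct and follows essentially the same route as the paper: start from the strong-convexity inequality, apply Lemma~\ref{ada-opt-omd-linear-regret-lemma}, telescope the Bregman terms (your sum to $T-1$ and the paper's sum to $T$ differ only by a term already dominated by the dropped boundary piece), bridge $x_{t+1}$ and $\hat{x}_t$ via the triangle inequality, absorb the $\tfrac{H_t}{2}\norm{x^*-\hat{x}_t}^2$ part with the strong-convexity slack, and control the $\tfrac{H_t}{2}\norm{\hat{x}_t-x_{t+1}}^2$ part with Lemma~\ref{point-loss-connection}. The only cosmetic difference is that the paper first replaces $H_t$ by $H_{1:t-1}+\gamma$ and then invokes Lemma~\ref{point-loss-connection} in the $\norm{\cdot}_{(t)}$ form, whereas you first extract the raw-norm bound from Lemma~\ref{point-loss-connection} and then apply $H_t\le H_{1:t-1}+\gamma$ twice; both orderings yield the same $2/H_{1:t}$ coefficient and hence the constant~$3$.
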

\begin{proof}
	For the given choice of regularizers, we have $r_{0:t}\del{x}=\frac{H_{1:t-1} + \gamma}{4} \norm{x}^2$ and $\bregmanadaregcumalative{x}{y}{t} = \frac{H_{1:t-1} + \gamma}{4} \norm{x - y}^2$. Since $r_{0:t}$ is $1$-strongly convex w.r.t. $\sqrt{\frac{H_{1:t-1} + \gamma}{2}} \norm{\cdot}$, we have $\norm{\cdot}_{\del{t}} = \sqrt{\frac{H_{1:t-1} + \gamma}{2}} \norm{\cdot}$ and $\norm{\cdot}_{\del{t},*} = \sqrt{\frac{2}{H_{1:t-1} + \gamma}} \norm{\cdot}_*$. Thus for any $x^* \in \Omega$ we have
	\begin{align*}
		& f_t \del{\hat{x}_t} - f_t \del{x^*}
		\\
		~\leq~&
		\tuple{g_t,\hat{x}_t-x^*} - \frac{H_t}{2} \norm{\hat{x}_t - x^*}^2
		\\
		~\leq~&
		\half \norm{g_t - \tilde{g}_t}_{\del{t},*}^2 + \bregmanadareg{x^*}{x_t} - \bregmanadareg{x^*}{x_{t+1}} 
		- \frac{H_t}{2} \norm{\hat{x}_t - x^*}^2
		\\
		~=~&
		\frac{\norm{g_t - \tilde{g}_t}_*^2}{H_{1:t-1} + \gamma} + \frac{H_{1:t-1} + \gamma}{4} \norm{x^* - x_t}^2 
		- \frac{H_{1:t-1} + \gamma}{4} \norm{x^* - x_{t+1}}^2 - \frac{H_t}{2} \norm{\hat{x}_t - x^*}^2,
	\end{align*}
	where the first inequality is due to the strong convexity of $f_t$, and the second inequality is due to Lemma~\ref{ada-opt-omd-linear-regret-lemma}. Observe that
	\begin{align*}
		&
		\sum_{t=1}^{T}{\frac{H_{1:t-1} + \gamma}{4} \setdel*{\norm{x^* - x_t}^2 - \norm{x^* - x_{t+1}}^2}}
		\\
		~=~&
		\sum_{t=1}^{T}{\norm{x^* - x_{t+1}}^2 \setdel*{\frac{H_{1:t} + \gamma}{4} - \frac{H_{1:t-1} + \gamma}{4}}} 
		+ \frac{\gamma}{4} \norm{x^* - x_1}^2 - \frac{H_{1:T} + \gamma}{4} \norm{x^* - x_{T+1}}^2
		\\
		~\leq~&
		\sum_{t=1}^{T}{\frac{H_t}{4} \norm{x^* - x_{t+1}}^2} + \frac{\gamma}{4} \norm{x^* - x_1}^2,
	\end{align*}
	and 
	\begin{align*}
		&
		\sum_{t=1}^{T}{\frac{H_t}{4} \norm{x^* - x_{t+1}}^2 - \frac{H_t}{2} \norm{\hat{x}_t - x^*}^2}  
		\\
		~=~&
		\sum_{t=1}^{T}{\frac{H_t}{4} \setdel*{\norm{x^* - \hat{x}_t + \hat{x}_t - x_{t+1}}^2 - 2 \norm{x^* - \hat{x}_t}^2}}
		\\
		~\leq~&
		\sum_{t=1}^{T}{\frac{H_t}{2} \norm{\hat{x}_t - x_{t+1}}^2} 
		\\
		~\leq~&
		\sum_{t=1}^{T}{\frac{H_{1:t-1}+\gamma}{2} \norm{\hat{x}_t - x_{t+1}}^2} 
		\\
		~\leq~&
		2 \sum_{t=1}^{T}{\frac{\norm{g_t - \tilde{g}_t}_*^2}{H_{1:t-1}+\gamma}},
	\end{align*}
	where the first inequality is obtained by applying the triangular inequality of norms the fact that $(a+b)^2 \leq 2 a^2 + 2 b^2$, the second inequality is due to the facts that $H_t \leq \gamma$ and $H_{1:t-1} \geq 0$, and the third inequality is due to Lemma~\ref{point-loss-connection}.
	
	Now by summing up the instantaneous regrets and using the above observation we get
	\begin{align*}
		\sum_{t=1}^{T}{f_t \del{\hat{x}_t} - f_t \del{x^*}}
		~\leq~&
		3 \sum_{t=1}^{T}{\frac{\norm{g_t - \tilde{g}_t}_*^2}{H_{1:t-1} + \gamma}} + \frac{\gamma}{4} \norm{x^*-x_1}^2 
		\\
		~\leq~&
		3 \sum_{t=1}^{T}{\frac{\norm{g_t - \tilde{g}_t}_*^2}{H_{1:t}}} + \frac{\gamma}{4} \norm{x^*-x_1}^2, 
	\end{align*}
	where the last inequality is due to the fact that $H_t \leq \gamma$.
\end{proof}

In the above theorem if $H_t \geq H > 0$ and $\norm{g_t-\tilde{g}_t}_* \leq 1$ (w.l.o.g) for all $t$, then it obtain a regret bound of the form $O \del*{\log{\sum_{t=1}^{T}{\norm{g_t-\tilde{g}_t}_*^2}}}$. When $H$ is small, however, this guaranteed regret can still be large.

Now instead of running Algorithm~\ref{algo:ada-opt-omd} on the observed sequence of $f_t$'s, we use the modified sequence of loss functions of the form
\begin{equation}
	\label{prox-modification}
	\tilde{f}_t \del{x} := f_t \del{x} + \frac{\lambda_t}{2} \norm{x - \hat{x}_t}^2, \, \lambda_t \geq 0,
\end{equation}
which is already considered in \cite{do2009proximal} for the non-optimistic mirror descent case. Given $f_t$ is $H_t$-strongly convex with respect to $\norm{\cdot}$, $\tilde{f}_t$ is $\del{H_t + \lambda_t}$-strongly convex. Also note that $\partial \tilde{f}_t \del{\hat{x}_t} = \partial f_t \del{\hat{x}_t}$ because the gradient of $\norm{x - \hat{x}_t}^2$ is $0$ when evaluated at $\hat{x}_t$ \cite{do2009proximal}. Thus in the updates \eqref{ada-omd-opt-eq1} and \eqref{ada-omd-opt-eq2} the terms $g_t$ and $\tilde{g}_{t+1}$ remain unchanged, only the regularizers $r_t$'s will change appropriately. By applying Theorem~\ref{ada-opt-omd-strong-cvx-theorem} for the modified sequence of losses given by \eqref{prox-modification} we obtain the following corollary. 

\begin{corollary}
	\label{prox-modification-corr}
	Let $2R = \maximizepairvalue{\norm{x-y}}$. Also let $f_t$ be $H_t$-strongly convex w.r.t. $\norm{\cdot}$, $H_t \leq \gamma$, and $\lambda_t \leq \delta$, for all $t \in \sbr{T}$. If Algorithm~\ref{algo:ada-opt-omd} is performed on the modified functions $\tilde{f}_t$'s with the regularizers $r_t$'s given by $r_0 \del{x}=0$, $r_1 \del{x}=\frac{\gamma + \delta}{4} \norm{x}^2$, and $r_t \del{x}=\frac{H_{t-1} + \lambda_{t-1}}{4} \norm{x}^2$ for all $t \geq 2$, then for any sequence $\lambda_1,...,\lambda_T \geq 0$, we get 
	\begin{equation*}
		\sum_{t=1}^{T}{f_t \del{\hat{x}_t} - f_t \del{x^*}} ~\leq~ 2R^2 \lambda_{1:T} 
		+ 3 \sum_{t=1}^{T}{\frac{\norm{g_t - \tilde{g}_t}_*^2}{\stronghlambda}} + \frac{\gamma + \delta}{4} \norm{x^*-x_1}^2.
	\end{equation*}
\end{corollary}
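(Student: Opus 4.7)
The plan is to invoke Theorem~\ref{ada-opt-omd-strong-cvx-theorem} on the surrogate sequence $\tilde{f}_t$ defined in \eqref{prox-modification} and then translate the resulting bound back to the original $f_t$'s using the diameter bound. This is essentially the standard proximal reduction from \cite{do2009proximal}, adapted to the optimistic setting.

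First I would verify that Theorem~\ref{ada-opt-omd-strong-cvx-theorem} applies to the surrogate. Each $\tilde{f}_t$ is $(H_t+\lambda_t)$-strongly convex w.r.t.\ $\norm{\cdot}$, and the uniform upper bound $H_t+\lambda_t \leq \gamma+\delta$ plays the role of $\gamma$ in the theorem. The prescribed regularizers in the statement are exactly those of Theorem~\ref{ada-opt-omd-strong-cvx-theorem} with $H_t$ replaced by $H_t+\lambda_t$ and $\gamma$ replaced by $\gamma+\delta$. Crucially, because $\partial \tilde{f}_t \del{\hat{x}_t} = \partial f_t \del{\hat{x}_t}$, Algorithm~\ref{algo:ada-opt-omd} produces the same iterates $g_t$ and the same optimistic hints $\tilde{g}_{t+1}$ whether it is run on $f_t$'s or on $\tilde{f}_t$'s. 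Applying the theorem verbatim therefore gives
\[
	\sum_{t=1}^{T}{\tilde{f}_t \del{\hat{x}_t} - \tilde{f}_t \del{x^*}} ~\leq~ 3 \sum_{t=1}^{T}{\frac{\norm{g_t - \tilde{g}_t}_*^2}{\stronghlambda}} + \frac{\gamma+\delta}{4} \norm{x^*-x_1}^2.
\]

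Next I would convert this surrogate regret back to the original regret. From \eqref{prox-modification} we have $\tilde{f}_t \del{\hat{x}_t} = f_t \del{\hat{x}_t}$ (since the quadratic term vanishes at $\hat{x}_t$) and $\tilde{f}_t \del{x^*} = f_t \del{x^*} + \frac{\lambda_t}{2} \norm{x^* - \hat{x}_t}^2$, so that
\[
	f_t \del{\hat{x}_t} - f_t \del{x^*} ~=~ \del{\tilde{f}_t \del{\hat{x}_t} - \tilde{f}_t \del{x^*}} + \frac{\lambda_t}{2} \norm{x^* - \hat{x}_t}^2.
\]
Summing over $t$ and using $\norm{x^* - \hat{x}_t} \leq 2R$ yields $\sum_{t=1}^{T}{\frac{\lambda_t}{2} \norm{x^* - \hat{x}_t}^2} \leq 2R^2 \lambda_{1:T}$, which combined with the surrogate bound gives the claimed inequality.

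There is no real obstacle here: the argument is a routine reduction to Theorem~\ref{ada-opt-omd-strong-cvx-theorem}. The only points requiring care are the matching of roles ($H_t \mapsto H_t+\lambda_t$ and $\gamma \mapsto \gamma+\delta$) and the observation that the subgradient identity $\partial \tilde{f}_t \del{\hat{x}_t} = \partial f_t \del{\hat{x}_t}$ makes the algorithm oblivious to the $\lambda_t$-perturbation at the query point, so the run of Algorithm~\ref{algo:ada-opt-omd} on the surrogate is identical to the one the corollary is actually about.
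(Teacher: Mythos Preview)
Your proposal is correct and follows exactly the approach the paper indicates: apply Theorem~\ref{ada-opt-omd-strong-cvx-theorem} to the surrogate losses $\tilde{f}_t$ (with $H_t \mapsto H_t+\lambda_t$ and $\gamma \mapsto \gamma+\delta$), then unwind using $\tilde{f}_t(\hat{x}_t)=f_t(\hat{x}_t)$, $\tilde{f}_t(x^*)=f_t(x^*)+\tfrac{\lambda_t}{2}\norm{x^*-\hat{x}_t}^2$, and the diameter bound $\norm{x^*-\hat{x}_t}\le 2R$. The paper does not spell out these details, but your write-up matches its one-line justification precisely.
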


\begin{algorithm}[tb]
	\caption{Curvature Adaptive and Optimistic Mirror Descent}
	\label{algo:improved-ada-opt-omd}
	\begin{algorithmic}
		\STATE {\bfseries Input:} $r_0 \del{x}=0$ and $r_1 \del{x}=\frac{\gamma + \delta}{4} \norm{x}^2$.
		\STATE {\bfseries Initialize:} $x_1, \hat{x}_1 = 0 \in \Omega$.
		\FOR{$t=1$ {\bfseries to} $T$}
		\STATE Predict $\hat{x}_t$, observe $f_t$, and incur loss $f_t \del{\hat{x}_t}$. \STATE Compute $g_t \in \partial f_t \del{\hat{x}_t}$ and $\tilde{g}_{t+1} \del{g_1,...,g_t}$.
		\STATE Compute $\lambda_t = \frac{\sqrt{\del{\stronghlambdaminus}^2+\frac{6 \norm{g_t - \tilde{g}_t}_*^2}{R^2}}-\del{\stronghlambdaminus}}{2}$
		\STATE Define $r_{t+1} \del{x}=\frac{H_{t} + \lambda_{t}}{4} \norm{x}^2$.
		\STATE Update
		\begin{align*}
			x_{t+1}
			~=~&
			\minimize{\tuple{g_t,x} + \bregmanadareg{x}{x_t}},
			\\
			\hat{x}_{t+1}
			~=~&
			\minimize{\tuple{\tilde{g}_{t+1},x} + \bregmanadaregcumalative{x}{x_{t+1}}{t+1}}.
		\end{align*}
		\ENDFOR
	\end{algorithmic}
\end{algorithm}

In the above corollary if we consider the two terms that depend on $\lambda_t$'s, the first term increases and the second term deceases with the increase of $\lambda_t$'s. Based on the online balancing heuristic approach \cite{hazan2007adaptive}, the positive solution of $2R^2 \lambda_{1:t} = 3 \frac{\norm{g_t - \tilde{g}_t}_*^2}{\stronghlambda}$ is given by 
\begin{equation*}
	\lambda_t = \frac{\sqrt{\del{\stronghlambdaminus}^2+\frac{6 \norm{g_t - \tilde{g}_t}_*^2}{R^2}}-\del{\stronghlambdaminus}}{2}.
\end{equation*}
The resulting algorithm with the above choice of $\lambda_t$ is given in Algorithm~\ref{algo:improved-ada-opt-omd}. By using the Lemma~3.1 from \cite{hazan2007adaptive} we obtain the following regret bound for Algorithm~\ref{algo:improved-ada-opt-omd}.

\begin{theorem}
	\label{curvature-adaptive-theorem}
	The regret of Algorithm~\ref{algo:improved-ada-opt-omd} on the sequence of $f_t$'s with curvature $H_t \geq 0$ is bounded by
	\begin{equation*}
		\sum_{t=1}^{T}{f_t \del{\hat{x}_t} - f_t \del{x^*}} ~\leq~ \frac{\gamma + \delta}{4} \norm{x^*-x_1}^2 
		+ 2 \inf_{\lambda_1^*,...,\lambda_T^*}{\setdel*{2R^2 \lambda_{1:T}^* + 3 \sum_{t=1}^{T}{\frac{\norm{g_t - \tilde{g}_t}_*^2}{H_{1:t}+\lambda_{1:t}^*}}}}.
	\end{equation*}
\end{theorem}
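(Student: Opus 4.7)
The plan is to derive this by combining Corollary~\ref{prox-modification-corr}, the explicit formula for $\lambda_t$ chosen in Algorithm~\ref{algo:improved-ada-opt-omd}, and an off-the-shelf balancing lemma from Hazan--Kale.

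First, I would apply Corollary~\ref{prox-modification-corr} with the specific $\lambda_t$ sequence produced by Algorithm~\ref{algo:improved-ada-opt-omd}. Since the algorithm runs Algorithm~\ref{algo:ada-opt-omd} on the proximally modified losses $\tilde{f}_t$ with $r_0,r_1$ and $r_t(x)=\frac{H_{t-1}+\lambda_{t-1}}{4}\norm{x}^2$ for $t\ge 2$, the corollary directly yields
\[
\sum_{t=1}^{T}{f_t(\hat{x}_t)-f_t(x^*)} ~\leq~ 2R^2 \lambda_{1:T}+3\sum_{t=1}^{T}{\frac{\norm{g_t-\tilde{g}_t}_*^2}{\stronghlambda}}+\frac{\gamma+\delta}{4}\norm{x^*-x_1}^2.
\]

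Next, I would verify the balancing identity satisfied by the closed-form $\lambda_t$ in the algorithm. Writing $\lambda_t$ as the positive root of $4\lambda_t(\lambda_t+\stronghlambdaminus)=\frac{6\norm{g_t-\tilde{g}_t}_*^2}{R^2}$ and rearranging gives $2R^2 \lambda_t (\stronghlambda)=3\norm{g_t-\tilde{g}_t}_*^2$, i.e.,
\[
2R^2 \lambda_t ~=~ \frac{3\norm{g_t-\tilde{g}_t}_*^2}{\stronghlambda}.
\]
Summing this identity over $t$ shows that the two $\lambda$-dependent terms in the regret bound above are equal, so the bound collapses to
\[
\sum_{t=1}^{T}{f_t(\hat{x}_t)-f_t(x^*)} ~\leq~ 4R^2 \lambda_{1:T}+\frac{\gamma+\delta}{4}\norm{x^*-x_1}^2.
\]

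Finally, I would appeal to Lemma~3.1 of \cite{hazan2007adaptive}, which is the standard statement that a positive sequence defined by this precise per-step balancing is $2$-competitive against the best offline parameter sequence. Applied here, the lemma gives, for every nonnegative $\lambda_1^*,\ldots,\lambda_T^*$,
\[
2R^2 \lambda_{1:T} ~\leq~ 2R^2 \lambda_{1:T}^*+3\sum_{t=1}^{T}{\frac{\norm{g_t-\tilde{g}_t}_*^2}{H_{1:t}+\lambda_{1:t}^*}}.
\]
Multiplying by $2$ and substituting into the previous display, then taking the infimum over $\lambda_1^*,\ldots,\lambda_T^*\ge 0$, yields the theorem.

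The only potentially subtle step is invoking the Hazan--Kale balancing lemma in our setting; its proof proceeds by induction on $t$ and uses only the algebraic form of the balancing equation (not the specific meaning of $H_{1:t}$ or $\norm{g_t-\tilde g_t}_*^2$), so it transfers immediately. Everything else is bookkeeping: confirming the $\lambda_t$ formula solves the stated quadratic and then cleanly summing the balanced per-step identity.
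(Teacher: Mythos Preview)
Your proposal is correct and follows exactly the route the paper indicates: apply Corollary~\ref{prox-modification-corr} with the algorithm's $\lambda_t$, use the per-step balancing identity $2R^2\lambda_t=\tfrac{3\norm{g_t-\tilde g_t}_*^2}{\stronghlambda}$ to collapse the two $\lambda$-terms, and then invoke Lemma~3.1 of \cite{hazan2007adaptive} to obtain the factor-$2$ competitiveness against any offline sequence $\lambda_t^*$. The paper gives only the one-line pointer to that lemma, so your write-up is simply a fleshed-out version of the same argument (and your $2R^2\lambda_t$ is the correct reading of the paper's balancing equation, where ``$\lambda_{1:t}$'' on the left is evidently a typo for $\lambda_t$).
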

Thus the Algorithm~\ref{algo:improved-ada-opt-omd} achieves a regret bound which is competitive with the bound achievable by the best offline choice of parameters $\lambda_t$'s. From the above theorem we obtain the following two corollaries which show that Algorithm~\ref{algo:improved-ada-opt-omd} achieves intermediate rates between $O \del*{\sqrt{\sum_{t=1}^{T}{\norm{g_t-\tilde{g}_t}_*^2}}}$ and $O \del*{\log{\sum_{t=1}^{T}{\norm{g_t-\tilde{g}_t}_*^2}}}$ depending on the curvature of the losses. 

\begin{corollary}
	For any sequence of convex loss functions $f_t$'s, the bound on the regret of Algorithm~\ref{algo:improved-ada-opt-omd} is $O \del*{\sqrt{\sum_{t=1}^{T}{\norm{g_t-\tilde{g}_t}_*^2}}}$.
\end{corollary}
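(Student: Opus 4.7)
The plan is to apply Theorem~\ref{curvature-adaptive-theorem} in the worst case where we get no curvature guarantee, namely with $H_t = 0$ for every $t$, and then exhibit a specific admissible sequence $\lambda_1^*,\dots,\lambda_T^*$ that witnesses the desired $O\del*{\sqrt{Q_T}}$ bound, where I write $Q_T := \sum_{t=1}^T \norm{g_t-\tilde{g}_t}_*^2$. Since the theorem takes an infimum over all non-negative $\lambda_t^*$, any concrete choice furnishes a valid upper bound.

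First, I would specialize the bound of Theorem~\ref{curvature-adaptive-theorem} to $H_t = 0$, so the infimum reduces to $\inf_{\lambda^*}\setdel*{2R^2 \lambda_{1:T}^* + 3 \sum_{t=1}^T \norm{g_t-\tilde{g}_t}_*^2 / \lambda_{1:t}^*}$. The key construction is to take $\lambda_{1:t}^* = c\sqrt{Q_t}$ for a constant $c>0$ to be tuned, where $Q_t := \sum_{s=1}^t \norm{g_s-\tilde{g}_s}_*^2$. Because $Q_t$ is monotone non-decreasing in $t$, the increments $\lambda_t^* = c(\sqrt{Q_t}-\sqrt{Q_{t-1}})$ are non-negative, so this is an admissible witness.

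Next I would evaluate the two terms under this choice. The first term becomes $2R^2 c\sqrt{Q_T}$ directly. For the second, I would invoke the same elementary inequality $\sum_{t=1}^T a_t / \sqrt{\sum_{s=1}^t a_s} \leq 2\sqrt{\sum_{t=1}^T a_t}$ already used in the proof of Corollary~\ref{diagonal-matrix-regret}, applied with $a_t = \norm{g_t-\tilde{g}_t}_*^2$, to obtain $3 \sum_t \norm{g_t-\tilde{g}_t}_*^2 / (c\sqrt{Q_t}) \leq (6/c)\sqrt{Q_T}$. Tuning $c$ to balance the two terms (the optimum being $c = \sqrt{3}/R$) yields an overall bound of order $R\sqrt{Q_T}$, matching the claim.

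The last step is simply to absorb the remaining additive term $\tfrac{\gamma+\delta}{4}\norm{x^*-x_1}^2$ from Theorem~\ref{curvature-adaptive-theorem} into the big-$O$; this is an $O(1)$ constant under the standing bounded-diameter assumption $\norm{x^*-x_1} \leq 2R$. There is no real obstacle here: the only thing to be careful about is verifying admissibility of the $\lambda_t^*$ sequence (which is immediate from the monotonicity of $Q_t$) and the fact that the case $Q_T = 0$ is trivial, since then all updates and losses coincide with the predictions and the regret bound holds vacuously.
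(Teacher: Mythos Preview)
Your argument is correct, but the paper's proof uses a simpler witness. Instead of taking $\lambda_{1:t}^* = c\sqrt{Q_t}$ and invoking the summation inequality $\sum_t a_t/\sqrt{\sum_{s\le t}a_s}\le 2\sqrt{\sum_t a_t}$, the paper simply sets $\lambda_1^* = \sqrt{Q_T}$ and $\lambda_t^* = 0$ for $t>1$, so that $\lambda_{1:t}^* \equiv \sqrt{Q_T}$ for every $t$. Then the second term becomes $3\sum_t \norm{g_t-\tilde g_t}_*^2 / (H_{1:t}+\sqrt{Q_T}) \le 3 Q_T/\sqrt{Q_T} = 3\sqrt{Q_T}$ directly, with no auxiliary lemma and no tuning of~$c$; the total is $(2R^2+3)\sqrt{Q_T}$. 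Your route works too and has the minor conceptual appeal that the witness $\lambda_t^*$ is causal (depends only on $Q_t$), but since the infimum in Theorem~\ref{curvature-adaptive-theorem} is over all offline sequences this buys nothing here, and the paper's constant-sequence choice is shorter. One small point: in your version the terms with $Q_t=0$ are $0/0$; this is harmless because the numerator vanishes whenever the denominator does, but it is worth stating explicitly rather than only handling the global case $Q_T=0$.
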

\begin{proof}
	Let $\lambda_{1}^* = \sqrt{\sum_{t=1}^{T}{\norm{g_t-\tilde{g}_t}_*^2}}$, and $\lambda_{t}^* = 0$ for all $t > 1$.
	\begin{align*}
		&
		2R^2 \lambda_{1:T}^* + 3 \sum_{t=1}^{T}{\frac{\norm{g_t - \tilde{g}_t}_*^2}{H_{1:t}+\lambda_{1:t}^*}} 
		\\
		~=~& 
		2R^2 \sqrt{\sum_{t=1}^{T}{\norm{g_t-\tilde{g}_t}_*^2}} + 3 \sum_{t=1}^{T}{\frac{\norm{g_t - \tilde{g}_t}_*^2}{0+\sqrt{\sum_{t=1}^{T}{\norm{g_t-\tilde{g}_t}_*^2}}}}
		\\
		~=~&
		\del*{2R^2 + 3} \sqrt{\sum_{t=1}^{T}{\norm{g_t-\tilde{g}_t}_2^2}}.
	\end{align*} 
\end{proof}

\begin{corollary}
	Suppose $\norm{g_t-\tilde{g}_t}_* \leq 1$ (w.l.o.g) and $H_t \geq H > 0$ for all $t \in \sbr{T}$. Then the bound on the regret of Algorithm~\ref{algo:improved-ada-opt-omd} is $O \del*{\log{\sum_{t=1}^{T}{\norm{g_t-\tilde{g}_t}_*^2}}}$.
\end{corollary}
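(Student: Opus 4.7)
The plan is to apply Theorem~\ref{curvature-adaptive-theorem} with the trivial choice $\lambda_1^* = \cdots = \lambda_T^* = 0$, since the hypothesis $H_t \geq H > 0$ already guarantees strictly positive cumulative curvature, so no extra smoothing from $\lambda_t^*$ is needed. With this choice the penalty term $2R^2\lambda_{1:T}^*$ vanishes, and using $H_{1:t} \geq tH$ the regret bound reduces to
\[
\frac{\gamma+\delta}{4}\norm{x^* - x_1}^2 \;+\; \frac{6}{H} \sum_{t=1}^T \frac{a_t}{t},
\]
where I abbreviate $a_t := \norm{g_t - \tilde{g}_t}_*^2 \in [0,1]$ and $A_T := \sum_{t=1}^T a_t$. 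The task is thus to prove $\sum_{t=1}^T a_t/t = O(\log A_T)$.

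For this I would use a rearrangement argument. Since $1/t$ is decreasing in $t$, the rearrangement inequality gives $\sum_t a_t/t \leq \sum_t a_t^\downarrow/t$, where $(a_t^\downarrow)$ denotes the decreasing rearrangement of $(a_t)$. The sorted sequence satisfies two useful bounds: $a_t^\downarrow \leq 1$ (by the hypothesis on $\norm{g_t - \tilde{g}_t}_*$) and $a_t^\downarrow \leq A_T/t$ (from $t \cdot a_t^\downarrow \leq a_1^\downarrow + \cdots + a_t^\downarrow \leq A_T$). Splitting the sum at $t \approx A_T$, using the first bound on the head and the second on the tail, yields
\[
\sum_{t=1}^T \frac{a_t^\downarrow}{t} \;\leq\; \sum_{t \leq A_T} \frac{1}{t} \;+\; A_T \sum_{t > A_T} \frac{1}{t^2} \;=\; O(\log A_T) + O(1),
\]
which gives the claimed regret bound.

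The main obstacle is precisely this last inequality: the naive bound $a_t/t \leq 1/t$ only yields $O(\log T)$, which is strictly weaker than $O(\log A_T)$ whenever the predictions $\tilde{g}_t$ are accurate enough that $A_T \ll T$. The rearrangement together with the two-regime splitting is what converts an index-based logarithm into a predictability-based logarithm, and is the step one has to get right in order to obtain the stated corollary rather than a looser $O(\log T)$ guarantee.
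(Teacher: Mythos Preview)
Your proposal is correct and follows exactly the paper's approach: choose $\lambda_t^* = 0$ for all $t$, use $H_{1:t} \ge tH$ to reduce the bound to $\frac{6}{H}\sum_t a_t/t$ with $a_t = \norm{g_t-\tilde g_t}_*^2 \in [0,1]$, and then invoke the inequality $\sum_{t} a_t/t = O\!\left(\log \sum_t a_t\right)$. The paper simply asserts this last inequality as a ``fact'' without argument; your rearrangement-plus-split computation is a clean justification of that fact and is the only substantive addition to the paper's proof.
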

\begin{proof}
	Set $\lambda_t^* = 0$ for all $t$.
	\begin{align*}
		2R^2 \lambda_{1:T}^* + 3 \sum_{t=1}^{T}{\frac{\norm{g_t - \tilde{g}_t}_*^2}{H_{1:t}+\lambda_{1:t}^*}} 
		~=~& 
		0 + 3 \sum_{t=1}^{T}{\frac{\norm{g_t - \tilde{g}_t}_*^2}{Ht+0}} 
		\\
		~=~& O \del*{\log{\sum_{t=1}^{T}{\norm{g_t-\tilde{g}_t}_*^2}}},
	\end{align*} 
	where the last inequality is due to the fact that if $a_t \leq 1$ for all $t \in \sbr{T}$, then $\sum_{t=1}^{T}{\frac{a_t}{t}} \leq O \del*{\log{\sum_{t=1}^{T}{a_t}}}$.
\end{proof}

The results obtained here can be extended to the applications discussed in \cite{do2009proximal,orabona2010online} to obtain much tighter results.

\section{Composite Losses}
\label{extensions}
Here we consider the case when observed loss function $f_t$ is composed with some non-negative (possibly non-smooth) convex regularizer term $\psi_t$ to impose certain constraints on the original problem. In this case we generally do not want to linearize the additional regularizer term, thus in the update rules given by \eqref{ada-omd-opt-eq1} and \eqref{ada-omd-opt-eq2} we include $\psi_t$ and $\psi_{t+1}$ respectively without linearizing them. This extension is presented in Algorithm~\ref{algo:comp-ada-opt-omd}. 

\begin{algorithm}[tb]
	\caption{Adaptive and Optimistic Mirror Descent with Composite Losses}
	\label{algo:comp-ada-opt-omd}
	\begin{algorithmic}
		\STATE {\bfseries Input:} regularizers $r_0, r_1 \geq 0$, composite losses $\setdel*{\psi_t}_t$ where $\psi_t \geq 0$.
		\STATE {\bfseries Initialize:} $x_1, \hat{x}_1 = 0 \in \Omega$.
		\FOR{$t=1$ {\bfseries to} $T$}
		\STATE Predict $\hat{x}_t$, observe $f_t$, and incur loss $f_t \del{\hat{x}_t} + \psi_t \del{\hat{x}_t}$. 
		\STATE Compute $g_t \in \partial f_t \del{\hat{x}_t}$ and $\tilde{g}_{t+1} \del{g_1,...,g_t}$.
		\STATE Construct $r_{t+1}$ s.t. $r_{0:t+1}$ is $1$-strongly convex w.r.t. $\norm{\cdot}_{\del{t+1}}$.
		\STATE Update
		\begin{align}
			x_{t+1}
			~=~&
			\minimize{\tuple{g_t,x} + \psi_t \del{x} + \bregmanadareg{x}{x_t}},
			\label{comp-ada-omd-opt-eq1}
			\\
			\hat{x}_{t+1}
			~=~&
			\minimize{\tuple{\tilde{g}_{t+1},x} +\psi_{t+1} \del{x} + \bregmanadaregcumalative{x}{x_{t+1}}{t+1}}.
			\label{comp-ada-omd-opt-eq2}
		\end{align}
		\ENDFOR
	\end{algorithmic}
\end{algorithm}

The following lemma provides a bound on the instantaneous regret of Algorithm~\ref{algo:comp-ada-opt-omd}. 
\begin{lemma}
	\label{comp-ada-omd-optimistic-lemma}
	The instantaneous regret of Algorithm~\ref{algo:comp-ada-opt-omd} w.r.t. any $x^* \in \Omega$ can be bounded as follows
	\begin{equation*}
		\setdel*{\compositefunc{\hat{x}_t}{\hat{x}_t}} - \setdel*{\compositefunc{x^*}{x^*}} 
		~\leq~ \half \norm{g_t-\tilde{g}_t}_{\del{t},*}^2 + \bregmanadareg{x^*}{x_t} - \bregmanadareg{x^*}{x_{t+1}}.
	\end{equation*}
\end{lemma}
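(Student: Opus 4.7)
My plan is to mirror the proof of Lemma~\ref{ada-opt-omd-linear-regret-lemma}, but to use a composite-objective variant of the first-order optimality condition so that the extra $\Psi_t$ terms in \eqref{comp-ada-omd-opt-eq1} and \eqref{comp-ada-omd-opt-eq2} come along for the ride instead of having to be linearized. The starting point is convexity of $f_t$, which gives $f_t(\hat{x}_t) - f_t(x^*) \leq \tuple{g_t,\hat{x}_t - x^*}$, so it suffices to bound $\tuple{g_t,\hat{x}_t-x^*} + \Psi_t(\hat{x}_t) - \Psi_t(x^*)$ by the right-hand side of the lemma.

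Next, I would use the same algebraic decomposition as in \eqref{ada-opt-omd-linear-regret-lemma-proof1}, namely
\[
\tuple{g_t,\hat{x}_t-x^*} = \tuple{g_t - \tilde{g}_t,\hat{x}_t - x_{t+1}} + \tuple{\tilde{g}_t,\hat{x}_t - x_{t+1}} + \tuple{g_t,x_{t+1} - x^*},
\]
and control the first term via Fenchel--Young: $\tuple{g_t - \tilde{g}_t,\hat{x}_t - x_{t+1}} \leq \tfrac12\norm{g_t-\tilde{g}_t}_{(t),*}^2 + \tfrac12\norm{\hat{x}_t - x_{t+1}}_{(t)}^2$. The key step is then a composite analogue of the first-order optimality condition used in the proof of Lemma~\ref{ada-opt-omd-linear-regret-lemma}: if $y = \argmin_{x\in\Omega}\{\tuple{g,x}+\Psi(x)+\bregmanadaregpsi{x}{u}\}$, then picking $\phi\in\partial\Psi(y)$ from the KKT condition and combining with convexity of $\Psi$ yields
\[
\tuple{y-z,g}+\Psi(y)-\Psi(z) \leq \bregmanadaregpsi{z}{u} - \bregmanadaregpsi{z}{y} - \bregmanadaregpsi{y}{u}, \quad \forall z\in\Omega.
\]
I would apply this to \eqref{comp-ada-omd-opt-eq2} (with $y=\hat{x}_t$, $z=x_{t+1}$, $\psi=r_{0:t}$) and to \eqref{comp-ada-omd-opt-eq1} (with $y=x_{t+1}$, $z=x^*$, $\psi=r_{0:t}$) to get
\begin{align*}
\tuple{\hat{x}_t-x_{t+1},\tilde{g}_t}+\Psi_t(\hat{x}_t)-\Psi_t(x_{t+1}) &\leq \bregmanadareg{x_{t+1}}{x_t} - \bregmanadareg{x_{t+1}}{\hat{x}_t} - \bregmanadareg{\hat{x}_t}{x_t},\\
\tuple{x_{t+1}-x^*,g_t}+\Psi_t(x_{t+1})-\Psi_t(x^*) &\leq \bregmanadareg{x^*}{x_t} - \bregmanadareg{x^*}{x_{t+1}} - \bregmanadareg{x_{t+1}}{x_t}.
\end{align*}
The $\Psi_t(x_{t+1})$ terms cancel when these are summed, leaving exactly $\Psi_t(\hat{x}_t)-\Psi_t(x^*)$ on the left.

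Putting the pieces together, the sum of the two optimality bounds plus the Fenchel--Young estimate yields
\[
\tuple{g_t,\hat{x}_t-x^*}+\Psi_t(\hat{x}_t)-\Psi_t(x^*) \leq \tfrac12\norm{g_t-\tilde{g}_t}_{(t),*}^2 + \tfrac12\norm{\hat{x}_t-x_{t+1}}_{(t)}^2 + \bregmanadareg{x^*}{x_t} - \bregmanadareg{x^*}{x_{t+1}} - \bregmanadareg{x_{t+1}}{\hat{x}_t},
\]
after dropping the two non-negative terms $\bregmanadareg{\hat{x}_t}{x_t}$ and $\bregmanadareg{x_{t+1}}{x_t}$. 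The final step uses $1$-strong convexity of $r_{0:t}$ with respect to $\norm{\cdot}_{(t)}$ to absorb $\tfrac12\norm{\hat{x}_t-x_{t+1}}_{(t)}^2 \leq \bregmanadareg{x_{t+1}}{\hat{x}_t}$. Combining with the convexity inequality for $f_t$ gives the claimed bound.

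The main obstacle I expect is stating the composite first-order optimality condition cleanly; once one justifies that the $\Psi_t$ subgradient at the minimizer can be replaced, via convexity of $\Psi_t$, by the finite difference $\Psi_t(y)-\Psi_t(z)$, the rest of the argument is identical to the non-composite case and the Bregman bookkeeping proceeds exactly as in Lemma~\ref{ada-opt-omd-linear-regret-lemma}.
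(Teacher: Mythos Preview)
Your proof is correct and follows essentially the same route as the paper's: the same three-term decomposition of $\tuple{g_t,\hat{x}_t-x^*}$, Fenchel--Young on the first term, composite first-order optimality for the two updates, and $1$-strong convexity of $r_{0:t}$ to absorb $\tfrac12\norm{\hat{x}_t-x_{t+1}}_{(t)}^2$ into $\bregmanadareg{x_{t+1}}{\hat{x}_t}$. The only cosmetic difference is that you fold convexity of $\Psi_t$ into the optimality inequality up front (so the $\Psi_t(x_{t+1})$ terms cancel directly), whereas the paper keeps the subgradient terms $\tuple{\Psi_t'(\hat{x}_t),\cdot}$ and $\tuple{\Psi_t'(x_{t+1}),\cdot}$ and invokes convexity of $\Psi_t$ in two separate steps at the end; one small bookkeeping note is that the two $\bregmanadareg{x_{t+1}}{x_t}$ terms actually cancel rather than being ``dropped.''
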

\begin{proof}
	The instantaneous regret of the algorithm can be bounded as below using the convexity of $f_t$
	\begin{equation*}
		\setdel*{\compositefunc{\hat{x}_t}{\hat{x}_t}} - \setdel*{\compositefunc{x^*}{x^*}} 
		~\leq~ \tuple{g_t,\hat{x}_t-x^*} + \setdel*{\Psi_t \del{\hat{x}_t} - \Psi_t \del{x^*}}.
	\end{equation*}
	Now consider
	\begin{equation}
		\label{comp-ada-omd-optimistic-proof1}
		\tuple{g_t,\hat{x}_t-x^*} ~=~ \tuple{g_t - \tilde{g}_t,\hat{x_t}-x_{t+1}} + \tuple{\tilde{g}_t,\hat{x_t}-x_{t+1}} 
		+ \tuple{g_t,x_{t+1}-x^*}.
	\end{equation}
	By the fact that $\tuple{a,b} \leq \norm{a} \norm{b}_* \leq \half \norm{a}^2 + \half \norm{b}_*^2$, we have 
	\begin{equation*}
		\tuple{g_t - \tilde{g}_t,\hat{x_t}-x_{t+1}} ~\leq~ \half \norm{\hat{x_t}-x_{t+1}}_{\del{t}}^2 + \half \norm{g_t - \tilde{g}_t}_{\del{t},*}^2
	\end{equation*}
	The first-order optimality condition for $x^* = \minimize{\tuple{g,x} + f \del{x} + \bregmanadaregpsi{x}{y}}$ and for $z \in \Omega$, 
	\begin{equation*}
		\tuple{x^* - z , g} ~\leq~ \tuple{z - x^* , f' \del{x^*}} 
		+ \bregmanadaregpsi{z}{y} - \bregmanadaregpsi{z}{x^*} - \bregmanadaregpsi{x^*}{y}.
	\end{equation*}
	By applying the above condition for \eqref{comp-ada-omd-opt-eq2} and \eqref{comp-ada-omd-opt-eq1} we have respectively 
	\begin{equation*}
		\tuple{\hat{x_t} - x_{t+1} , \tilde{g}_t} ~\leq~ \tuple{\Psi_t' \del{\hat{x_t}} , x_{t+1} - \hat{x_t}} 
		+ \bregmanadareg{x_{t+1}}{x_{t}} - \bregmanadareg{x_{t+1}}{\hat{x_t}} - \bregmanadareg{\hat{x_t}}{x_{t}}
	\end{equation*}
	\begin{equation*}
		\tuple{x_{t+1} - x^* , g_t} ~\leq~ \tuple{\Psi_t' \del{x_{t+1}} , x^* - x_{t+1}} 
		+ \bregmanadareg{x^*}{x_t} - \bregmanadareg{x^*}{x_{t+1}} - \bregmanadareg{x_{t+1}}{x_t}.
	\end{equation*}
	Thus by \eqref{comp-ada-omd-optimistic-proof1} we have
	\begin{align*}
		&\tuple{g_t,\hat{x}_t-x^*} + \setdel*{\Psi_t \del{\hat{x}_t} - \Psi_t \del{x^*}}
		\\
		~\leq~&
		\half \norm{\hat{x_t}-x_{t+1}}_{\del{t}}^2 + \half \norm{g_t - \tilde{g}_t}_{\del{t},*}^2 
		+ \bregmanadareg{x^*}{x_t} - \bregmanadareg{x^*}{x_{t+1}} - \bregmanadareg{x_{t+1}}{\hat{x_t}} 
		\\
		&
		+ \Psi_t \del{\hat{x}_t} - \Psi_t \del{x^*} + \tuple{\Psi_t' \del{\hat{x_t}} , x_{t+1} - \hat{x_t}} 
		+ \tuple{\Psi_t' \del{x_{t+1}} , x^* - x_{t+1}}
		\\
		~\leq~&
		\half \norm{g_t - \tilde{g}_t}_{\del{t},*}^2 + \bregmanadareg{x^*}{x_t} - \bregmanadareg{x^*}{x_{t+1}} 
		\\
		&
		+ \Psi_t \del{\hat{x}_t} + \tuple{\Psi_t' \del{\hat{x_t}} , x_{t+1} - \hat{x_t}} 
		+ \tuple{\Psi_t' \del{x_{t+1}} , x^* - x_{t+1}} - \Psi_t \del{x^*}
		\\
		~\leq~&
		\half \norm{g_t - \tilde{g}_t}_{\del{t},*}^2 + \bregmanadareg{x^*}{x_t} - \bregmanadareg{x^*}{x_{t+1}} 
		\\
		&
		+ \Psi_t \del{x_{t+1}} + \tuple{\Psi_t' \del{x_{t+1}} , x^* - x_{t+1}} - \Psi_t \del{x^*}
		\\
		~\leq~&
		\half \norm{g_t - \tilde{g}_t}_{\del{t},*}^2 + \bregmanadareg{x^*}{x_t} - \bregmanadareg{x^*}{x_{t+1}} 
		+ \Psi_t \del{x^*} - \Psi_t \del{x^*}
		\\
		~=~&
		\half \norm{g_t - \tilde{g}_t}_{\del{t},*}^2 + \bregmanadareg{x^*}{x_t} - \bregmanadareg{x^*}{x_{t+1}}
	\end{align*}
	where the second inequality is due to 1-strong convexity of $r_{0:t}$ w.r.t. $\norm{\cdot}_{\del{t}}$, and the third and fourth inequalities are due to the convexity of $\Psi_t$ at $\hat{x}_t$ and $x_{t+1}$ respectively.
\end{proof}

From the above lemma we can observe that the instantaneous regret of Algorithm~\ref{algo:comp-ada-opt-omd} is exactly equal to that of the non-composite version (Algorithm~\ref{algo:ada-opt-omd}). Thus all the improvements that we discussed in the previous sections for the non-composite case are also applicable to composite losses as well.

\section{Discussion}

We present adaptive variants of optimistic mirror descent which improve the existing regret bounds when some of the \textit{easy data} instances occur together. Algorithms that we have discussed in this paper achieve regret guarantees that hold at any time.

We also note that the regret bounds given in this work can be converted into convergence bounds for batch stochastic problems using online-to-batch conversion techniques \cite{cesa2004generalization,kakade2009generalization}.

As in Theorem~\ref{ada-opt-omd-strong-cvx-theorem}, we can obtain regret bound for the case when the loss function $f_t$ is $\beta_t$-convex (which is broader class than exp-concave losses) as well. But for the resulting bound we cannot apply Lemma~3.1 from \cite{hazan2007adaptive} to obtain a near optimal closed form solution of $\lambda_t$. 

\bibliographystyle{plain}
\bibliography{refs}


\appendix

\section{Proofs}
\label{sec:proof}
\begin{proof} \textbf{(Proposition~\ref{simple-form-omd})}
	Observe that
	\begin{align*}
		&
		\minimize{\bregmanadaregpsi{x}{y}}  
		\\
		~=~&
		\minimize{\psi \del{x} - \psi \del{y} - \tuple{\nabla \psi \del{y}, x - y}} 
		\\
		~=~&
		\minimize{\psi \del{x} - \tuple{\nabla \psi \del{y}, x}}
		\\
		~=~&
		\minimize{\psi \del{x} - \tuple{\nabla \psi \del{u} - g, x}}
		\\
		~=~&
		\minimize{\tuple{g, x} + \psi \del{x} - \psi \del{u} - \tuple{\nabla \psi \del{u}, x - u}}
		\\
		~=~&
		\minimize{\tuple{g, x} + \bregmanadaregpsi{x}{u}}.
	\end{align*}
\end{proof}

\begin{proof} \textbf{(Lemma~\ref{point-loss-connection})}
	Since $r_{0:t}$ is $1$-strongly convex w.r.t. $\norm{\cdot}_{\del{t}}$ we have
	\begin{align*}
		&
		\bregmanadareg{\hat{x}_t}{x_{t+1}} 
		\\
		~=~&
		r_{0:t} \del{\hat{x}_t} - r_{0:t} \del{x_{t+1}} - \tuple{\regderivative{x_{t+1}}, \hat{x}_t - x_{t+1}}
		\\
		~\geq~&
		\half \norm{\hat{x}_t - x_{t+1}}_{\del{t}}^2,
		\\
		\text{and}
		\\
		&
		\bregmanadareg{x_{t+1}}{\hat{x}_{t}}
		\\
		~=~&
		r_{0:t} \del{x_{t+1}} - r_{0:t} \del{\hat{x}_{t}} - \tuple{\regderivative{\hat{x}_{t}}, x_{t+1} - \hat{x}_{t}}
		\\
		~\geq~&
		\half \norm{x_{t+1} - \hat{x}_{t}}_{\del{t}}^2.
	\end{align*}
	Adding these two bounds, we obtain 
	\begin{equation}
		\label{point-loss-connection-proof1}
		\norm{\hat{x}_{t} - x_{t+1}}_{\del{t}}^2 ~\leq~ \tuple{\regderivative{\hat{x}_t} - \regderivative{x_{t+1}} , \hat{x}_{t} - x_{t+1}}.
	\end{equation}
	
	Suppose $y_{t+1}$ and $\hat{y}_{t}$ satisfy the conditions $\regderivative{y_{t+1}} = \regderivative{x_{t}} - g_t$ and $\regderivative{\hat{y}_{t}} = \regderivative{x_{t}} - \tilde{g}_{t}$ respectively. Then by applying Proposition~\ref{simple-form-omd} to the updates in \eqref{ada-omd-opt-eq2} and \eqref{ada-omd-opt-eq1} of Algorithm~\ref{algo:ada-opt-omd}, we obtain
	\begin{align*}
		x_{t+1} 
		~=~& \minimize{\bregmanadareg{x}{y_{t+1}}}
		\\
		\hat{x}_{t} 
		~=~& \minimize{\bregmanadareg{x}{\hat{y}_{t}}}.
	\end{align*}
	By applying the first order optimality condition for the above two optimization statements, we have
	\begin{align*}
		\tuple{\regderivative{x_{t+1}} - \regderivative{y_{t+1}} , \hat{x}_t - x_{t+1}} 
		~\geq~& 0
		\\
		\tuple{\regderivative{\hat{x}_t} - \regderivative{\hat{y}_t} , x_{t+1} - \hat{x}_t}
		~\geq~& 0,
	\end{align*}
	respectively. Combining these two bounds, we obtain
	\begin{equation*}
		\tuple{ \regderivative{\hat{y}_t} - \regderivative{y_{t+1}} , \hat{x}_t - x_{t+1}} 
		~\geq~ \tuple{ \regderivative{\hat{x}_t} - \regderivative{x_{t+1}} , \hat{x}_t - x_{t+1}}.
	\end{equation*}
	By combining the above result with \eqref{point-loss-connection-proof1}, we obtain
	\begin{align*}
		&
		\norm{\hat{x}_{t} - x_{t+1}}_{\del{t}}^2 
		\\
		~\leq~& 
		\tuple{ \regderivative{\hat{y}_t} - \regderivative{y_{t+1}} , \hat{x}_t - x_{t+1}} 
		\\
		~\leq~&
		\norm{\regderivative{\hat{y}_t} - \regderivative{y_{t+1}}}_{\del{t},*} \norm{\hat{x}_t - x_{t+1}}_{\del{t}},
	\end{align*}
	by a generalized Cauchy-Schwartz inequality. Dividing both sides by $\norm{\hat{x}_t - x_{t+1}}_{\del{t}}$, we have
	\begin{align*}
		&
		\norm{\hat{x}_{t} - x_{t+1}}_{\del{t}} 
		\\
		~\leq~&
		\norm{\regderivative{\hat{y}_t} - \regderivative{y_{t+1}}}_{\del{t},*}
		\\
		~=~&
		\norm{\del*{\regderivative{x_{t}} - \tilde{g}_t} - \del*{\regderivative{x_t} - g_t}}_{\del{t},*}
		\\
		~=~&
		\norm{g_t - \tilde{g}_t}_{\del{t},*}.
	\end{align*}
\end{proof}

\begin{proof} \textbf{(Corollary~\ref{full-matrix-regret})}
	By letting $\eta=\frac{D}{\sqrt{2}}$ for the given sequence of regularizers, we get $r_{0:t} \del*{x} = \frac{1}{2 \eta} \norm{x}_{Q_{1:t}}^2$. Since $r_{0:t}$ is $1$-strongly convex w.r.t. $\frac{1}{\sqrt{\eta}} \norm{\cdot}_{Q_{1:t}}$, we have $\norm{\cdot}_{\del*{t}} = \frac{1}{\sqrt{\eta}} \norm{\cdot}_{Q_{1:t}}$ and $\norm{\cdot}_{\del*{t},*} = \sqrt{\eta} \norm{\cdot}_{Q_{1:t}^{-1}}$. By Theorem~\ref{ada-omd-optimistic-thm} the regret bound of Algorithm~\ref{algo:ada-opt-omd} with this choice of regularizer sequence can be given as follows
	\begin{equation*}
		\sum_{t=1}^{T}{f_t \del{\hat{x}_t} - f_t \del{x^*}} ~\leq~ \half \sum_{t=1}^{T}{\norm{g_t - \tilde{g}_{t}}_{\del{t},*}^2} + \sum_{t=1}^{T}{\bregmanadaregsingle{x^*}{x_{t}}{t}}.
	\end{equation*}
	
	Consider
	\begin{align*}
		&\half \sum_{t=1}^{T}{\norm{g_t - \tilde{g}_{t}}_{\del{t},*}^2} 
		\\
		~=~&
		\half \sum_{t=1}^{T}{\eta \norm{g_t - \tilde{g}_{t}}_{Q_{1:t}^{-1}}^2}
		\\
		~=~&
		\frac{\eta}{2} \sum_{t=1}^{T}{\del*{g_t - \tilde{g}_{t}} Q_{1:t}^{-1} \del*{g_t - \tilde{g}_{t}}^T}
		\\
		~=~&
		\frac{\eta}{2} \sum_{t=1}^{T}{\del*{g_t - \tilde{g}_{t}} \del*{\gamma^2 I + G_{2:t}}^{-\half} \del*{g_t - \tilde{g}_{t}}^T}
		\\
		~\leq~&
		\frac{\eta}{2} \sum_{t=1}^{T}{\del*{g_t - \tilde{g}_{t}} \del*{G_{2:t+1}}^{-\half} \del*{g_t - \tilde{g}_{t}}^T}
		\\
		~\leq~&
		\eta \, \text{tr} \del*{G_{2:T+1}^{\half}}
		\\
		~\leq~&
		\eta \, \text{tr} \del*{\del*{\gamma^2 I + G_{2:T}}^{\half}}
		\\
		~=~&
		\eta \, \text{tr} \del*{Q_{1:T}},
	\end{align*}
	where the first inequality is due to the facts that $\gamma^2 I \succcurlyeq G_{t+1}$ and $A \succcurlyeq B \succcurlyeq 0 \Rightarrow A^{\half} \succcurlyeq B^{\half}$ and $B^{-1} \succcurlyeq A^{-1}$, the second inequality is due to the fact that $\sum_{t=1}^{T}{a_t^T \del*{\sum_{s=1}^{t}{a_s a_s^T}}^{-\half} a_t} \leq \text{tr} \del*{\sum_{t=1}^{T}{a_t a_t^T}}$ (see Lemma~10 from \cite{duchi2011adaptive}), and the third inequality is due to the fact that $\gamma^2 I \succcurlyeq G_{T+1}$. Also observing that 
	\begin{align*}
		&\sum_{t=1}^{T}{\bregmanadaregsingle{x^*}{x_{t}}{t}} 
		\\
		~=~&
		\sum_{t=1}^{T}{\frac{1}{2 \eta} \norm{x^* - x_t}_{Q_t}^2}
		\\
		~\leq~&
		\frac{1}{2 \eta} \sum_{t=1}^{T}{\norm{x^* - x_t}_2^2 \lambda_{\text{max}} \del*{Q_t}}
		\\
		~\leq~&
		\frac{1}{2 \eta} \sum_{t=1}^{T}{\norm{x^* - x_t}_2^2 \text{tr} \del*{Q_t}}
		\\
		~\leq~&
		\frac{1}{2 \eta} \sum_{t=1}^{T}{D^2 \text{tr} \del*{Q_t}}
		\\
		~=~&
		\frac{D^2}{2 \eta} \text{tr} \del*{Q_{1:T}}.
	\end{align*}
	completes the proof.
\end{proof}

\section{Mirror Descent with $\beta$-convex losses}
Given a convex set $\Omega \subseteq \mathbb{R}^n$ and $\beta > 0$, a function $f:\Omega \rightarrow \mathbb{R}$ is $\beta$-convex, if for all $x,y \in \Omega$
\begin{equation*}
	f \del{x} ~\geq~ f \del{y} + \tuple{g,x-y} + \beta \norm{x-y}_{gg^T}^2, \, g \in \partial f \del{y}.
\end{equation*}

\begin{algorithm}[tb]
	\caption{Adaptive Mirror Descent}
	\label{algo:ada-omd}
	\begin{algorithmic}
		\STATE {\bfseries Input:} regularizers $r_0 \geq 0$.
		\STATE {\bfseries Initialize:} $x_1 = 0 \in \Omega$.
		\FOR{$t=1$ {\bfseries to} $T$}
		\STATE Predict $x_t$, observe $f_t$, and incur loss $f_t \del{x_t}$. 
		\STATE Compute $g_t \in \partial f_t \del{x_t}$.
		\STATE Construct $r_{t}$ s.t. $r_{0:t}$ is $1$-strongly convex w.r.t. $\norm{\cdot}_{\del{t}}$.
		\STATE Update
		\begin{align}
			\label{ada-omd-eq}
			x_{t+1}
			~=~&
			\minimize{\tuple{g_t,x} + \bregmanadareg{x}{x_t}}. 
		\end{align}
		\ENDFOR
	\end{algorithmic}
\end{algorithm}

\begin{lemma}
	\label{ada-omd-linear-regret-lemma}
	The instantaneous linear regret of Algorithm~\ref{algo:ada-omd} w.r.t. any $x^* \in \Omega$ can be bounded as follows
	\begin{equation*}
		\tuple{x_t - x^* , g_t} ~\leq~ \bregmanadareg{x^*}{x_t} - \bregmanadareg{x^*}{x_{t+1}} + \half \norm{g_t}_{\del{t},*}^2.
	\end{equation*}
\end{lemma}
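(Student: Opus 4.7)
The plan is to follow the same template as the proof of Lemma~\ref{ada-opt-omd-linear-regret-lemma}, but specialized to the non-optimistic setting (which amounts to setting $\tilde{g}_t = 0$ and removing the intermediate point $\hat{x}_t$). In particular, the structure is: split the regret into a ``drift'' term and an ``optimality'' term, bound the drift term by Fenchel--Young, bound the optimality term via the first-order condition for the mirror descent update, and then cancel a quadratic remainder using the $1$-strong convexity of $r_{0:t}$.

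First I would write the identity
\[
\tuple{g_t, x_t - x^*} = \tuple{g_t, x_t - x_{t+1}} + \tuple{g_t, x_{t+1} - x^*},
\]
and bound the first piece by the Fenchel--Young inequality with respect to the norm $\norm{\cdot}_{\del{t}}$ and its dual $\norm{\cdot}_{\del{t},*}$:
\[
\tuple{g_t, x_t - x_{t+1}} \leq \half \norm{g_t}_{\del{t},*}^2 + \half \norm{x_t - x_{t+1}}_{\del{t}}^2.
\]

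Next, I would invoke the first-order optimality condition for the update \eqref{ada-omd-eq} (exactly the condition already cited in the proof of Lemma~\ref{ada-opt-omd-linear-regret-lemma}): for every $z \in \Omega$,
\[
\tuple{x_{t+1} - z, g_t} \leq \bregmanadareg{z}{x_t} - \bregmanadareg{z}{x_{t+1}} - \bregmanadareg{x_{t+1}}{x_t}.
\]
Applying this with $z = x^*$ gives
\[
\tuple{g_t, x_{t+1} - x^*} \leq \bregmanadareg{x^*}{x_t} - \bregmanadareg{x^*}{x_{t+1}} - \bregmanadareg{x_{t+1}}{x_t}.
\]

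Finally, adding the two bounds and using $1$-strong convexity of $r_{0:t}$ with respect to $\norm{\cdot}_{\del{t}}$, which yields $\bregmanadareg{x_{t+1}}{x_t} \geq \half \norm{x_{t+1} - x_t}_{\del{t}}^2$, the quadratic term $\half \norm{x_t - x_{t+1}}_{\del{t}}^2$ cancels and one arrives at the claimed inequality. There is no real obstacle here: the proof is essentially a simplified copy of Lemma~\ref{ada-opt-omd-linear-regret-lemma}, since dropping the optimistic correction removes the need for the intermediate decomposition through $\tilde{g}_t$ and for invoking the optimality condition for the second update \eqref{ada-omd-opt-eq2}. The only point to be careful about is making sure the norm $\norm{\cdot}_{\del{t}}$ used in Fenchel--Young is exactly the one with respect to which $r_{0:t}$ is $1$-strongly convex, so that the cancellation goes through cleanly.
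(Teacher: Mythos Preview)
Your proof is correct and follows essentially the same approach as the paper's: the same decomposition $\tuple{g_t,x_t-x^*}=\tuple{g_t,x_t-x_{t+1}}+\tuple{g_t,x_{t+1}-x^*}$, the same Fenchel--Young bound, the same first-order optimality condition for \eqref{ada-omd-eq}, and the same cancellation via $1$-strong convexity of $r_{0:t}$. The only cosmetic difference is that the paper states the optimality condition in its raw form $\tuple{x-x_{t+1},g_t+\regderivative{x_{t+1}}-\regderivative{x_t}}\geq 0$ and then applies the three-point identity, whereas you invoke it directly in the Bregman-divergence form already cited in Lemma~\ref{ada-opt-omd-linear-regret-lemma}.
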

\begin{proof}
	By the first-order optimality condition for \eqref{ada-omd-eq} we have, 
	\begin{equation}
		\tuple{x - x_{t+1} , g_t + \regderivative{x_{t+1}} - \regderivative{x_t}} \quad \geq \quad 0
		\label{ada-omd-linear-regret-lemma-proof1}
	\end{equation}
	Consider
	\begin{align*}
		&
		\tuple{x_t - x^* , g_t} 
		\\
		~=~&
		\tuple{x_{t+1} - x^* , g_t} + \tuple{x_t - x_{t+1} , g_t} 
		\\
		~\leq~& 
		\tuple{x^* - x_{t+1} , \regderivative{x_{t+1}} - \regderivative{x_t}} 
		+ \tuple{x_t - x_{t+1} , g_t}  
		\\
		~=~& 
		\bregmanadareg{x^*}{x_t} - \bregmanadareg{x^*}{x_{t+1}} - \bregmanadareg{x_{t+1}}{x_t} 
		+ \tuple{x_t - x_{t+1} , g_t}
		\\
		~\leq~& 
		\bregmanadareg{x^*}{x_t} - \bregmanadareg{x^*}{x_{t+1}} - \bregmanadareg{x_{t+1}}{x_t} 
		+ \half \norm{x_t - x_{t+1}}_{\del{t}}^2 + \half \norm{g_t}_{\del{t},*}^2
		\\
		~\leq~& 
		\bregmanadareg{x^*}{x_t} - \bregmanadareg{x^*}{x_{t+1}} + \half \norm{g_t}_{\del{t},*}^2,
	\end{align*}
	where the first inequality is due to \eqref{ada-omd-linear-regret-lemma-proof1}, the second equality is due to the fact that $\tuple{\psiderivative{a} - \psiderivative{b},c-a} =  \bregmanadaregpsi{c}{b} - \bregmanadaregpsi{c}{a} - \bregmanadaregpsi{a}{b}$, the second inequality is due to the fact that $\tuple{a,b} \leq \norm{a} \norm{b}_* \leq \half \norm{a}^2 + \half \norm{b}_*^2$, and the third inequality is due to the $1$-strong convexity of $r_{0:t}$ w.r.t. $\norm{\cdot}_{\del{t}}$.
\end{proof}

\begin{theorem}
	\label{ada-omd-beta-cvx-theorem}
	Let $f_t$ is $\beta_t$-convex, $\forall t \in \sbr{T}$. If $r_t$'s are given by
	\begin{equation}
		\label{ada-omd-reg-choice-beta-t}
		r_t \del{x}=\norm{x}_{h_t}^2, \text{ where } h_0=I_{n \times n} \text{ and } h_t=\beta_t g_t g_t^T \text{ for } t \geq 1,
	\end{equation}
	then the regret of Algorithm~\ref{algo:ada-omd} w.r.t. any $x^* \in \Omega$ is bounded by
	\begin{equation*}
		\sum_{t=1}^{T}{f_t \del{x_t} - f_t \del{x^*}} ~\leq~ \frac{1}{4} \sum_{t=1}^{T}{\norm{g_t}_{h_{0:t}^{-1}}^2} + \norm{x^* - x_1}_2^2 .
	\end{equation*}
\end{theorem}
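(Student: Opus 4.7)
The plan is to combine Lemma~\ref{ada-omd-linear-regret-lemma} with the $\beta_t$-convexity inequality to get a per-step bound, and then handle the telescoping carefully because the norm $\|\cdot\|_{h_{0:t}}$ that appears in each Bregman divergence is itself changing with $t$. The key observation will be that the extra quadratic terms that fall out of the telescoping are exactly of the form $\beta_t\|x^*-x_t\|_{g_t g_t^T}^2$, which cancel against the curvature terms produced by $\beta_t$-convexity.

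First I would apply $\beta_t$-convexity with $y=x_t$, $x=x^*$, $g=g_t$ to write
\[
f_t(x_t) - f_t(x^*) ~\leq~ \tuple{g_t, x_t - x^*} - \beta_t \norm{x_t-x^*}_{g_t g_t^T}^2,
\]
and bound the linear term by Lemma~\ref{ada-omd-linear-regret-lemma}. Next I would verify that for the quadratic regularizer $r_{0:t}(x) = \norm{x}_{h_{0:t}}^2$ the Bregman divergence is $\bregmanadareg{x}{y} = \norm{x-y}_{h_{0:t}}^2$ and that $r_{0:t}$ is $1$-strongly convex w.r.t.\ $\sqrt{2}\norm{\cdot}_{h_{0:t}}$, so that $\tfrac{1}{2}\norm{g_t}_{(t),*}^2 = \tfrac{1}{4}\norm{g_t}_{h_{0:t}^{-1}}^2$. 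Summing then gives
\[
\sum_{t=1}^{T} \bigl[f_t(x_t)-f_t(x^*)\bigr] ~\leq~ \tfrac{1}{4}\sum_{t=1}^{T}\norm{g_t}_{h_{0:t}^{-1}}^2 \;+\; S \;-\; \sum_{t=1}^{T}\beta_t\norm{x_t-x^*}_{g_t g_t^T}^2,
\]
where $S := \sum_{t=1}^{T}\bigl[\norm{x^*-x_t}_{h_{0:t}}^2 - \norm{x^*-x_{t+1}}_{h_{0:t}}^2\bigr]$.

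The main work is then to simplify $S$. Rewriting it by shifting indices in the negative terms gives
\[
S ~=~ \norm{x^*-x_1}_{h_{0:1}}^2 - \norm{x^*-x_{T+1}}_{h_{0:T}}^2 + \sum_{t=2}^{T}\norm{x^*-x_t}_{h_t}^2,
\]
and since $h_0=I$ and $h_t=\beta_t g_t g_t^T$ for $t\ge 1$, the first term expands as $\norm{x^*-x_1}_2^2 + \beta_1\norm{x^*-x_1}_{g_1 g_1^T}^2$. Combining, the contribution $\sum_{t=1}^{T}\beta_t\norm{x^*-x_t}_{g_tg_t^T}^2$ produced by $S$ cancels exactly with the curvature correction, leaving only
\[
\norm{x^*-x_1}_2^2 - \norm{x^*-x_{T+1}}_{h_{0:T}}^2 ~\leq~ \norm{x^*-x_1}_2^2,
\]
which delivers the stated bound.

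The main obstacle I anticipate is bookkeeping in the telescoping step: one has to match the asymmetric form $\norm{x^*-x_t}_{h_{0:t}}^2 - \norm{x^*-x_{t+1}}_{h_{0:t}}^2$ (same weighting, consecutive iterates) against the standard telescoping structure, and then see that the residual single-norm terms arising from the increments $h_{0:t}-h_{0:t-1}=h_t=\beta_t g_t g_t^T$ align precisely with the $\beta_t$-convexity penalty. No other estimates are needed, but getting the indices and the initial $h_0=I$ correction right is what makes the cancellation work and removes the dependence on $\{\beta_t\}$ from the final bound.
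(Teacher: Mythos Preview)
Your proposal is correct and follows essentially the same approach as the paper: apply $\beta_t$-convexity, invoke Lemma~\ref{ada-omd-linear-regret-lemma}, identify $\bregmanadareg{x^*}{y}=\norm{x^*-y}_{h_{0:t}}^2$ and $\tfrac{1}{2}\norm{g_t}_{(t),*}^2=\tfrac{1}{4}\norm{g_t}_{h_{0:t}^{-1}}^2$, and then telescope so that the increments $\norm{x^*-x_t}_{h_t}^2=\beta_t\norm{x^*-x_t}_{g_tg_t^T}^2$ cancel against the curvature terms. The only cosmetic difference is that the paper absorbs the curvature term into the per-step inequality before telescoping, whereas you carry it as a separate sum and cancel after rewriting $S$; the algebra is identical.
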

\begin{proof}
	For the choice of regularizer sequence $\setdel{r_t}$ given by \eqref{ada-omd-reg-choice-beta-t}, we have $r_{0:t}\del{x}=\norm{x}_{h_{0:t}}^2$ and $\bregmanadareg{x}{y}=\half \del*{\sqrt{2}\norm{x-y}_{h_{0:t}}}^2$. Since $r_{0:t}$ is $1$-strongly convex w.r.t. $\sqrt{2}\norm{\cdot}_{h_{0:t}}$, we have $\norm{\cdot}_{\del{t}}=\sqrt{2}\norm{\cdot}_{h_{0:t}}$ and $\norm{\cdot}_{\del{t},*}=\frac{1}{\sqrt{2}} \norm{\cdot}_{h_{0:t}^{-1}}$.
	
	For any $x^* \in \Omega$
	\begin{align*}
		&
		f_t \del{x_t} - f_t \del{x^*}
		\\
		~\leq~&
		\tuple{g_t,x_t - x^*} - \beta_t \norm{x^* - x_t}_{g_t g_t^T}^2 
		\\
		~\leq~&
		\bregmanadareg{x^*}{x_t} - \bregmanadareg{x^*}{x_{t+1}} 
		+ \half \norm{g_t}_{\del{t},*}^2 - \norm{x^* - x_t}_{\beta_t g_t g_t^T}^2
		\\
		~=~&
		\norm{x^* - x_t}_{h_{0:t}}^2 - \norm{x^* - x_{t+1}}_{h_{0:t}}^2 - \norm{x^* - x_t}_{h_t}^2 
		+ \half \norm{g_t}_{\del{t},*}^2, 
	\end{align*}
	where the first inequality is due to the $\beta_t$-convexity of $f_t \del{\cdot}$, and the second inequality is due to Lemma~\ref{ada-omd-linear-regret-lemma}. By summing all the instantaneous regrets we get
	\begin{align*}
		&
		\sum_{t=1}^{T}{f_t \del{x_t}} - \sum_{t=1}^{T}{f_t \del{x^*}} 
		\\
		~\leq~&
		\sum_{t=1}^{T}{\setdel*{\norm{x^* - x_t}_{h_{0:t}}^2 - \norm{x^* - x_t}_{h_{0:t-1}}^2 -\norm{x^* - x_t}_{h_{t}}^2}} 
		+ \norm{x^* - x_1}_{h_{0}}^2 - \norm{x^* - x_{T+1}}_{h_{0:T}}^2 + \half \sum_{t=1}^{T}{\norm{g_t}_{\del{t},*}^2}
		\\
		~\leq~&
		\norm{x^* - x_1}_2^2 + \frac{1}{4} \sum_{t=1}^{T}{\norm{g_t}_{h_{0:t}^{-1}}^2}.
	\end{align*}
\end{proof}

Now instead of running Algorithm~\ref{algo:ada-omd} on the observed sequence of $f_t$'s, we use the modified sequence of loss functions of the form
\begin{equation}
	\label{prox-modification-beta}
	\tilde{f}_t \del{x} := f_t \del{x} + \lambda_t g\del{x}, \, \lambda_t \geq 0,
\end{equation}
where $g\del{x}$ is $1$-convex. By following the proof of Theorem~\ref{ada-omd-beta-cvx-theorem} for the modified sequence of losses given by \eqref{prox-modification-beta} we obtain the following corollary.

\begin{theorem}
	\label{curve-case-1-beta-thm1}
	Let $g\del{x}$ be a $1$-convex function, $A^2 = \maximizevalue{g \del{x}}$ and $B = \maximizevalue{\norm{g' \del{x}}_{\del*{g' \del{x} g' \del{x}^T}^{-1}}}$. Also let $f_t$ be $\beta_t$-convex ($\beta_t \geq 0$), $\forall t \in \sbr{T}$. 
	If Algorithm~\ref{algo:ada-omd} is performed on the modified functions $\tilde{f}_t$'s with the regularizers $r_t$'s given by 
	\begin{equation}
		\label{curve-case-1-reg-choice-beta-t}
		r_t \del{x}=\norm{x}_{h_t}^2, \text{ where } h_0=I_{n \times n}, \text{ and }
		h_t=\beta_t g_t g_t^T + \lambda_t g' \del{x_t} g' \del{x_t}^T, \text{ for } t \geq 1,
	\end{equation} 
	then for any sequence $\lambda_1,...,\lambda_T \geq 0$, we get
	\begin{equation*}
		\sum_{t=1}^{T}{f_t \del{x_t} - f_t \del{x^*}} ~\leq~ \del*{A^2+\frac{B^2}{2}} \lambda_{1:T}
		+ \half \sum_{t=1}^{T}{\norm{g_t}_{h_{0:t}^{-1}}^2} + \norm{x^* - x_1}_2^2.
	\end{equation*}
\end{theorem}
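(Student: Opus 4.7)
The plan is to mimic the proof of Theorem~\ref{ada-omd-beta-cvx-theorem} applied to the perturbed sequence $\tilde f_t$ and then pay a controlled price to convert the regret back to $f_t$. Since Algorithm~\ref{algo:ada-omd} is executed on $\tilde f_t$, the subgradient at round $t$ is $s_t := g_t + \lambda_t g'\del{x_t} \in \partial \tilde f_t\del{x_t}$.

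First I combine the $\beta_t$-convexity of $f_t$ with the $\lambda_t$-convexity of $\lambda_t g$ (inherited from $1$-convexity of $g$) to obtain, for any $x^* \in \Omega$,
\[
\tilde f_t\del{x_t} - \tilde f_t\del{x^*} ~\leq~ \tuple{s_t, x_t - x^*} - \norm{x^* - x_t}_{h_t}^2,
\]
where $h_t = \beta_t g_t g_t^T + \lambda_t g'\del{x_t} g'\del{x_t}^T$ is exactly the curvature matrix used in the algorithm's regularizer. Applying Lemma~\ref{ada-omd-linear-regret-lemma} to the linear term and using that $r_{0:t}\del{x} = \norm{x}_{h_{0:t}}^2$ is $1$-strongly convex w.r.t.\ $\sqrt{2}\norm{\cdot}_{h_{0:t}}$ (so $\norm{\cdot}_{\del{t},*}^2 = \frac{1}{2}\norm{\cdot}_{h_{0:t}^{-1}}^2$) yields
\[
\tilde f_t\del{x_t} - \tilde f_t\del{x^*} ~\leq~ \norm{x^*-x_t}_{h_{0:t}}^2 - \norm{x^*-x_{t+1}}_{h_{0:t}}^2 - \norm{x^* - x_t}_{h_t}^2 + \frac{1}{4}\norm{s_t}_{h_{0:t}^{-1}}^2.
\]
Summing and invoking $h_{0:t} = h_{0:t-1} + h_t$, the first three terms telescope to at most $\norm{x^* - x_1}_{h_0}^2 = \norm{x^* - x_1}_2^2$, exactly as in the proof of Theorem~\ref{ada-omd-beta-cvx-theorem}.

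To convert to $f_t$-regret I add back $\sum_t \lambda_t\del{g\del{x^*} - g\del{x_t}} \leq A^2 \lambda_{1:T}$, using that the definition $A^2 = \sup_x g\del{x}$ together with the non-negativity of $g$ (implicit in a $1$-convex regularizer) makes this bound valid. For the dual-norm piece $\del{a+b}^2 \leq 2a^2 + 2b^2$ gives
\[
\frac{1}{4}\norm{s_t}_{h_{0:t}^{-1}}^2 ~\leq~ \frac{1}{2}\norm{g_t}_{h_{0:t}^{-1}}^2 + \frac{1}{2}\lambda_t^2 \norm{g'\del{x_t}}_{h_{0:t}^{-1}}^2.
\]
The crux is then to show $\lambda_t \norm{g'\del{x_t}}_{h_{0:t}^{-1}}^2 \leq B^2$. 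Since $h_{0:t} \succeq h_t \succeq \lambda_t g'\del{x_t} g'\del{x_t}^T$ by construction, conjugating both sides with $h_{0:t}^{-1/2}$ produces a rank-one PSD matrix on the right whose unique nonzero eigenvalue $\lambda_t g'\del{x_t}^T h_{0:t}^{-1} g'\del{x_t}$ must therefore be bounded by $1$; and $1$ is precisely $B^2$, since $\norm{v}_{\del{vv^T}^{-1}} = 1$ for any nonzero $v$ under the pseudoinverse interpretation. This delivers the $\frac{B^2}{2}\lambda_{1:T}$ contribution, and combining the three pieces yields the stated regret bound.

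The main obstacle is precisely this last PSD bound: using only the cruder $h_{0:t} \succeq h_0 = I$ would give $\norm{g'\del{x_t}}_{h_{0:t}^{-1}}^2 \leq \norm{g'\del{x_t}}_2^2$, which when multiplied by $\lambda_t^2$ rather than $\lambda_t$ spoils the linear-in-$\lambda_{1:T}$ form that the theorem claims; the rank-one comparison trick is what rescues the correct rate. All the remaining bookkeeping (telescoping Bregman divergences, matching constants between $r_{0:t}$ and $\norm{\cdot}_{\del{t}}$, and the sub-gradient identification $s_t \in \partial \tilde f_t\del{x_t}$) is mechanical and mirrors the structure of the proof of Theorem~\ref{ada-omd-beta-cvx-theorem}.
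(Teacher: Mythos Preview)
Your proposal is correct and follows essentially the same route as the paper: apply the $\beta$-convexity inequalities to $\tilde f_t$, invoke Lemma~\ref{ada-omd-linear-regret-lemma} and telescope as in Theorem~\ref{ada-omd-beta-cvx-theorem}, split $\norm{s_t}_{h_{0:t}^{-1}}^2$ via $(a+b)^2\le 2a^2+2b^2$, bound the $g'$-part using $h_{0:t}\succeq \lambda_t g'(x_t)g'(x_t)^T$, and finally drop $\lambda_t g(x_t)\ge 0$ while bounding $\lambda_t g(x^*)\le \lambda_t A^2$. Your rank-one eigenvalue phrasing of the key PSD step is exactly equivalent to the paper's use of $h_{0:t}^{-1}\preceq(\lambda_t g'(x_t)g'(x_t)^T)^{-1}$, and your observation that $B^2=1$ under the pseudoinverse convention is a point the paper leaves implicit.
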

\begin{proof}
	Since $f_t$ is $\beta_t$-convex and $g$ is $1$-convex, for any $x^* \in \Omega$ we have
	\begin{align*}
		&
		\setdel*{f_t \del{x_t} + \lambda_t g \del{x_t}} - \setdel*{f_t \del{x^*} + \lambda_t g \del{x^*}}
		\\
		~=~&
		f_t \del{x_t} - f_t \del{x^*} + \lambda_t \setdel*{g \del{x_t} - g \del{x^*}}
		\\
		~\leq~&
		\tuple{g_t,x_t - x^*} - \beta_t \norm{x^* - x_t}_{g_t g_t^T}^2 
		+ \lambda_t \setdel*{\tuple{g' \del{x_t},x_t - x^*} - \norm{x^* - x_t}_{g' \del{x_t} g' \del{x_t}^T}^2}
		\\
		~=~&
		\tuple{g_t + \lambda_t g' \del{x_t} , x_t - x^*} - \norm{x^* - x_t}_{\beta_t g_t g_t^T + \lambda_t g' \del{x_t} g' \del{x_t}^T}^2.  
	\end{align*}
	By following the similar steps from the proof of Theorem~\ref{ada-omd-beta-cvx-theorem} we get
	\begin{equation*}
		\sum_{t=1}^{T}{f_t \del{x_t} + \lambda_t g \del{x_t}} - \setdel*{\sum_{t=1}^{T}{f_t \del{x^*} + \lambda_t g \del{x^*}}} 
		~\leq~ 
		\frac{1}{4} \sum_{t=1}^{T}{\norm{g_t + \lambda_t g' \del{x_t}}_{h_{0:t}^{-1}}^2} + \norm{x^* - x_1}_2^2.
	\end{equation*}
	By using the facts that $\norm{x+y}_A^2 \leq 2 \norm{x}_A^2 + 2 \norm{y}_A^2$, $h_{0:t} \succcurlyeq h_t \succcurlyeq \lambda_t g' \del{x_t} g' \del{x_t}^T$, and $\norm{g' \del{x_t}}_{\del*{g' \del{x_t} g' \del{x_t}^T}^{-1}} \leq B$, we have
	\begin{align*}
		&
		\sum_{t=1}^{T}{f_t \del{x_t} + \lambda_t g \del{x_t}} - \setdel*{\sum_{t=1}^{T}{f_t \del{x^*} + \lambda_t g \del{x^*}}} 
		\\
		~\leq~&
		\half \sum_{t=1}^{T}{\setdel*{\norm{g_t}_{h_{0:t}^{-1}}^2 + \lambda_t^2 \norm{g' \del{x_t}}_{h_{0:t}^{-1}}^2}} + \norm{x^* - x_1}_2^2
		\\
		~\leq~&
		\half \sum_{t=1}^{T}{\setdel*{\norm{g_t}_{h_{0:t}^{-1}}^2 + \lambda_t^2 \norm{g' \del{x_t}}_{\del*{\lambda_t g' \del{x_t} g' \del{x_t}^T}^{-1}}^2}} 
		+ \norm{x^* - x_1}_2^2
		\\
		~\leq~&
		\half \sum_{t=1}^{T}{\norm{g_t}_{h_{0:t}^{-1}}^2} + \frac{B^2}{2} \lambda_{1:T} + \norm{x^* - x_1}_2^2.
	\end{align*}
	By neglecting the $g \del{x_t}$ terms in the L.H.S. and using the fact that $g\del{x^*} \leq A^2$ we get
	\begin{equation*}
		\sum_{t=1}^{T}{f_t \del{x_t}} ~\leq~ \sum_{t=1}^{T}{f_t \del{x^*}} + A^2 \lambda_{1:T} + \half \sum_{t=1}^{T}{\norm{g_t}_{h_{0:t}^{-1}}^2} 
		+ \norm{x^* - x_1}_2^2 + \frac{B^2}{2} \lambda_{1:T}.
	\end{equation*}
\end{proof}

But we cannot apply Lemma~3.1 from \cite{hazan2007adaptive} for the above regret bound to obtain a near optimal closed form solution to $\lambda_t$. One could employ an optimization algorithm to find the optimal $\lambda_t$.

\end{document}